\def\eqref#1{equation~\ref{#1}}
\def\1{\bm{1}}
\DeclareMathAlphabet{\mathsfit}{\encodingdefault}{\sfdefault}{m}{sl}
\SetMathAlphabet{\mathsfit}{bold}{\encodingdefault}{\sfdefault}{bx}{n}
\newcommand{\E}{\mathbb{E}}
\DeclareMathOperator*{\argmax}{arg\,max}
\DeclareMathOperator*{\argmin}{arg\,min}
\algnewcommand\algorithmicreturn{\textbf{return}}
\algnewcommand\RETURN{\State \algorithmicreturn}%
\theoremstyle{definition}
\newtheorem{theorem}{Theorem}
\newtheorem{lemma}{Lemma}
\newtheorem{remark}{Remark}
\newtheorem{prop}{Proposition}
\newcommand{\vect}[1]{\boldsymbol{#1}}
\newcommand\numberthis{\addtocounter{equation}{1}\tag{\theequation}}
\newcommand{\genComment}[2]{\ifnum\comments=1{\textcolor{#1}{\textsf{\footnotesize #2}}}\fi}
\title{Zeroth-Order Supervised Policy Improvement}
\author{%
Hao Sun\textsuperscript{1}\thanks{sh018@ie.cuhk.edu.hk},
Ziping Xu\textsuperscript{2},
Yuhang Song\textsuperscript{3},
Meng Fang\textsuperscript{4},
Jiechao Xiong\textsuperscript{4},
Bo Dai\textsuperscript{5},
Bolei Zhou\textsuperscript{1} \\
\textsuperscript{1}The Chinese University of Hong Kong,
\textsuperscript{2}University of Michigan,\\
\textsuperscript{3}University of Oxford,
\textsuperscript{4}Tencent,
\textsuperscript{5}Nanyang Technological University
}
\begin{document}

\maketitle

\begin{abstract}
Policy gradient (PG) algorithms have been widely used in reinforcement learning (RL). However, PG algorithms rely on exploiting the value function being learned with the first-order update locally, which results in limited sample efficiency. In this work, we propose an alternative method called Zeroth-Order Supervised Policy Improvement (ZOSPI). ZOSPI exploits the estimated value function $Q$ globally while preserving the local exploitation of the PG methods based on zeroth-order policy optimization. This learning paradigm follows Q-learning but overcomes the difficulty of efficiently operating argmax in continuous action space. 
It finds max-valued action within a small number of samples. The policy learning of ZOSPI has two steps: First, it samples actions and evaluates those actions with a learned value estimator, and then it learns to perform the action with the highest value through supervised learning. We further demonstrate such a supervised learning framework can learn multi-modal policies. Experiments show that ZOSPI achieves competitive results on the continuous control benchmarks with a remarkable sample efficiency.\footnote{Code is included in the supplemental materials.} 
\end{abstract}
\section{Introduction}
Model-free Reinforcement Learning achieves great successes in many challenging tasks~\cite{mnih2015human,vinyals2019grandmaster,pachockiopenai}, however one hurdle for its applicability to real-world control problems is the low sample efficiency. 
To improve the sample efficiency, off-policy methods~\cite{degris2012off,gu2016q,wang2016sample,lillicrap2015continuous,fujimoto2018addressing} reuse the experiences generated by previous policies to optimize the current policy, therefore can obtain a much higher sample efficiency than the on-policy methods~\cite{schulman2015trust,schulman2017proximal}. Alternatively, SAC~\cite{haarnoja2018soft} improves sample efficiency by conducting more active exploration with maximum entropy regularizer~\cite{haarnoja2017reinforcement} to the off-policy actor critic~\cite{degris2012off,zhang2019generalized}.
OAC~\cite{ciosek2019better} further improves SAC by combining it with the Upper Confidence Bound heuristics~\cite{brafman2002r} to incentivize more informed exploration.
However, these previous methods all rely on Gaussian-parameterized policies and local exploration strategies that simply add noises at the action space, thus they might still converge to poor sub-optimal solutions, which has been shown in the work of~\cite{tessler2019distributional}.



In this work we aim to explore an alternative approach to policy gradient paradigm based on supervised learning. Our method carries out non-local exploration through global value-function exploitation to achieve higher sample efficiency for continuous control tasks. 
Specifically, to better exploit the learned value function $Q$, we propose to search the action space globally for a better target action. This is in contrast to previous policy gradient methods that only utilize the local information of the learned value functions (e.g. the Jacobian matrix~\cite{silver2014deterministic}).
The idea behind our work is related to the value-based policy gradient methods~\cite{lillicrap2015continuous,fujimoto2018addressing}, where the policy gradient optimization step takes the role of finding a well-performing action given a learned state-action value function. In previous work, the policy gradient step tackles the curse of dimensionality for deep Q-learning since it is intractable to directly search for the maximal value in the continuous action space~\cite{mnih2015human}. Differently, we circumvent searching for the action with the maximal value in the continuous action space by finding the max-valued action within a small set of sampled actions. 

To perform a global exploitation of the learned value function $Q$, we propose to apply a zeroth-order optimization scheme and update the target policy through supervised learning, which is inspired by works of evolution strategies~\cite{salimans2017evolution,conti2018improving,mania2018simple} that adopt zeroth-order optimizations in the parameter space. 
Different from the standard policy gradient, combining the zeroth-order optimization with supervised learning forms a new way of policy update. Such an update avoids the local improvement of policy gradient: when policy gradient is applied, the target policy uses policy gradient to adjust its predictions according to the deterministic policy gradient theorem~\cite{silver2014deterministic}, but such adjustments can only lead to local improvements and may induce sub-optimal policies due to the non-convexity of the policy function~\cite{tessler2019distributional}; on the contrary, our integrated approach of zeroth-order optimization and supervised learning 
can greatly improve the non-convex policy optimization and more likely escape the potential local minimum.

To summarize, in this work we introduce a simple yet effective policy optimization method called Zeroth-Order Supervised Policy Improvement (ZOSPI), as an alternative to the policy gradient approaches for continuous control. The policy from ZOSPI exploits global information of the learned value function $Q$ and updates itself through sample-based supervised learning. Besides, we demonstrate learning multi-modal policy based on ZOSPI to show the flexibility of the proposed supervised learning paradigm. The experiments show the improved empirical performance of ZOSPI over several popular policy gradient methods on the continuous control benchmarks, in terms of both higher sample efficiency and asymptotic performance.



\section{Related Work}
\subsection{Policy Gradient Methods}
The policy gradient methods solve an MDP by directly optimizing the policy to maximize the cumulative reward~\cite{williams1992simple,sutton1998reinforcement}. 
While the prominent on-policy policy gradient methods like TRPO~\cite{schulman2015trust} and PPO~\cite{schulman2017proximal} improve the learning stability of vanilla policy gradient~\cite{williams1992simple} via trust region updates, 
the off-policy methods such as ~\cite{degris2012off, lillicrap2015continuous,wang2016sample}
employ an experience replay mechanism to achieve higher sample efficiency. 
The work of TD3~\cite{fujimoto2018addressing} further addresses the function approximation error and boosts the stability of DDPG with several improvements.  
Another line of works is the combination of policy gradient methods and the max-entropy regularizer, which leads to better exploration and stable asymptotic performances~\cite{haarnoja2017reinforcement,haarnoja2018soft}. 
All of these approaches adopt function approximators~\cite{sutton2000policy} for state or state-action value estimation and take directionally-uninformed Gaussian as the policy parameterization, which lead to a local exploration behavior~\cite{ciosek2019better,tessler2019distributional}.


\subsection{RL by Supervised Learning}
Iterative supervised learning and self-imitate learning are becoming an alternative approach for model-free RL. Instead of applying policy gradient for policy improvement, methods based on supervised methods update policies by minimizing the mean square error between target actions and current actions predicted by a policy network~\cite{sun2019policy}, or alternatively by maximizing the likelihood for a stochastic policy class~\cite{ghosh2019learning}. While those previous works focus on the Goal-Conditioned tasks in RL, in this work we aim to tackle more general RL tasks. Some other works use supervised learning to optimize the policy towards manually selected policies to achieve better training stability under both offline~\cite{wang2018exponentially} and online settings~\cite{zhang2019policy,abdolmaleki2018relative,song2019v}. Differently, in our work the policy learning is based on a much simpler formulation than the previous attempts~\cite{abdolmaleki2018relative,lim2018actor,simmons2019q}, and it does not rely on any expert data to achieve competitive performance.


\subsection{Zeroth-Order Methods}
Zeroth-order optimization methods, also called gradient-free methods, are widely used when it is difficult to compute the gradients. They approximate the local gradient with random samples around the current estimate. The works in~\cite{wang2017stochastic,golovin2019gradientless} show that a local zeroth-order optimization method has a convergence rate that depends logarithmically on the ambient dimension of the problem under some sparsity assumptions. It can also efficiently escape saddle points
in non-convex optimizations~\cite{vlatakis2019efficiently,bai2020escaping}. In RL, many studies have verified an improved sample efficiency of zeroth-order optimization \cite{usunier2016episodic,mania2018simple,salimans2017evolution}. In this work we provide a novel way of combining the local sampling and the global sampling to ensure that our algorithm can approximate the gradient descent locally and also find a better global region. 

\section{Preliminaries}
\subsection{Markov Decision Processes}
We consider the deterministic Markov Decision Process (MDP) with continuous state and action spaces in the discounted infinite-horizon setting. Such MDPs can be denoted as $\mathcal{M} = (\mathcal{S}, \mathcal{A}, \mathcal{T}, r, \gamma)$, where the state space $\mathcal{S}$ and the action space $\mathcal{A}$ are continuous, and the unknown state transition probability representing the transition dynamics is denoted by $\mathcal{T}: \mathcal{S} \times \mathcal{A} \mapsto \mathcal{S}$. $r: \mathcal{S} \times \mathcal{A} \mapsto [0, 1]$ is the reward function and $\gamma \in [0, 1]$ is the discount factor. An MDP $\mathcal{M}$ and a learning algorithm
operating on $\mathcal{M}$ with an arbitrary initial state $s_0 \in \mathcal{S}$ constitute a stochastic process described sequentially by the state $s_t$ visited at time step $t$, the action $a_t$ chosen by the algorithm at step $t$, the reward $r_t = r(s_t, a_t)$ and the next state $s_{t+1} = \mathcal{T}(s_t, a_t)$ for any $t = 0, \dots, T$. Let $H_t = \{s_0, a_0, r_0, \dots, s_{t}, a_{t}, r_{t}\}$ be the trajectory up to time $t$. Our algorithm finds 
the policy that maximizes the discounted cumulative rewards $\sum_{t=0}^T \gamma^t r_t$.
Our work follows the general Actor-Critic framework~\cite{konda2000actor,peters2008natural,degris2012off,wang2016sample}, which learns in an unknown environment using a value network denoted by $Q_{w_t}: \mathcal{S} \times \mathcal{A} \mapsto \mathbb{R}$ for estimating $Q$ values and a policy network for learning the behavior policy $\pi_{\theta_t}: \mathcal{S} \mapsto \mathcal{A}$. Here $w_t$ and $\theta_t$ are the parameters of these two networks at step $t$, respectively. 

\subsection{Mixture Density Networks}

The optimal policies in many environments tend to have multi-modal property. Mixture Density Networks (MDN), previously proposed in~\cite{bishop1994mixture} to solve the stochastic prediction problem, is a potential multi-modal policy function to be introduced in this work. One challenge here is that previous policy gradient methods are not compatible with multi-modal policies based on MDN.
 While in normal regression tasks, the learning objective is to maximize the log-likelihood of observed data given the current parameters of a predictive model, MDN uses the mixture of Gaussian with multiple parameters as $\mathcal{D} = \{X_i, Y_i\}_\mathbb{N}$. Then the likelihood is given by
\begin{equation}
   \mathcal{L} = \sum_{i=1}^{i=\mathbb{N}}\mathcal{L}(Y_i|X_i, \theta) = \sum_{i=1}^{i=\mathbb{N}} \sum^K_{k=1} w_k(X_i,\theta) \phi(Y_i| \mu_k(X_i,\theta), \sigma_k(X_i,\theta)),
\end{equation}
where $\theta$ denotes the parameters of neural networks with three branches of outputs $\{w_{i},\mu_{i},\sigma_{i}\}_K$, $\phi$ is the probability density function of normal distribution and $\sum_{i=1}^{K} w_i = 1$.

\section{Zeroth-Order Supervised Continuous Control}
Q-learning in the tabular setting finds the best discrete action given the current state, which can be difficult in the continuous action space due to the non-convexity of $Q$. A policy network is thus trained to approximate the optimal action. However, gradient-based algorithm may be trapped by local minima or saddle point depending on the initialization. In this section, we introduce a hybrid method with a global policy and a perturbation policy. The global policy predicts a coarse action by supervised learning on uniformly sampled actions and the perturbation network iterates based on the coarse prediction. In the following section, we give a motivating example to demonstrate the benefits of the hybrid framework.


\subsection{A Motivating Example}
\begin{figure}[t]
\centering
\subfigure[Policy Gradient]{
\begin{minipage}[htbp]{0.33\linewidth}
	\centering
	\includegraphics[width=1\linewidth]{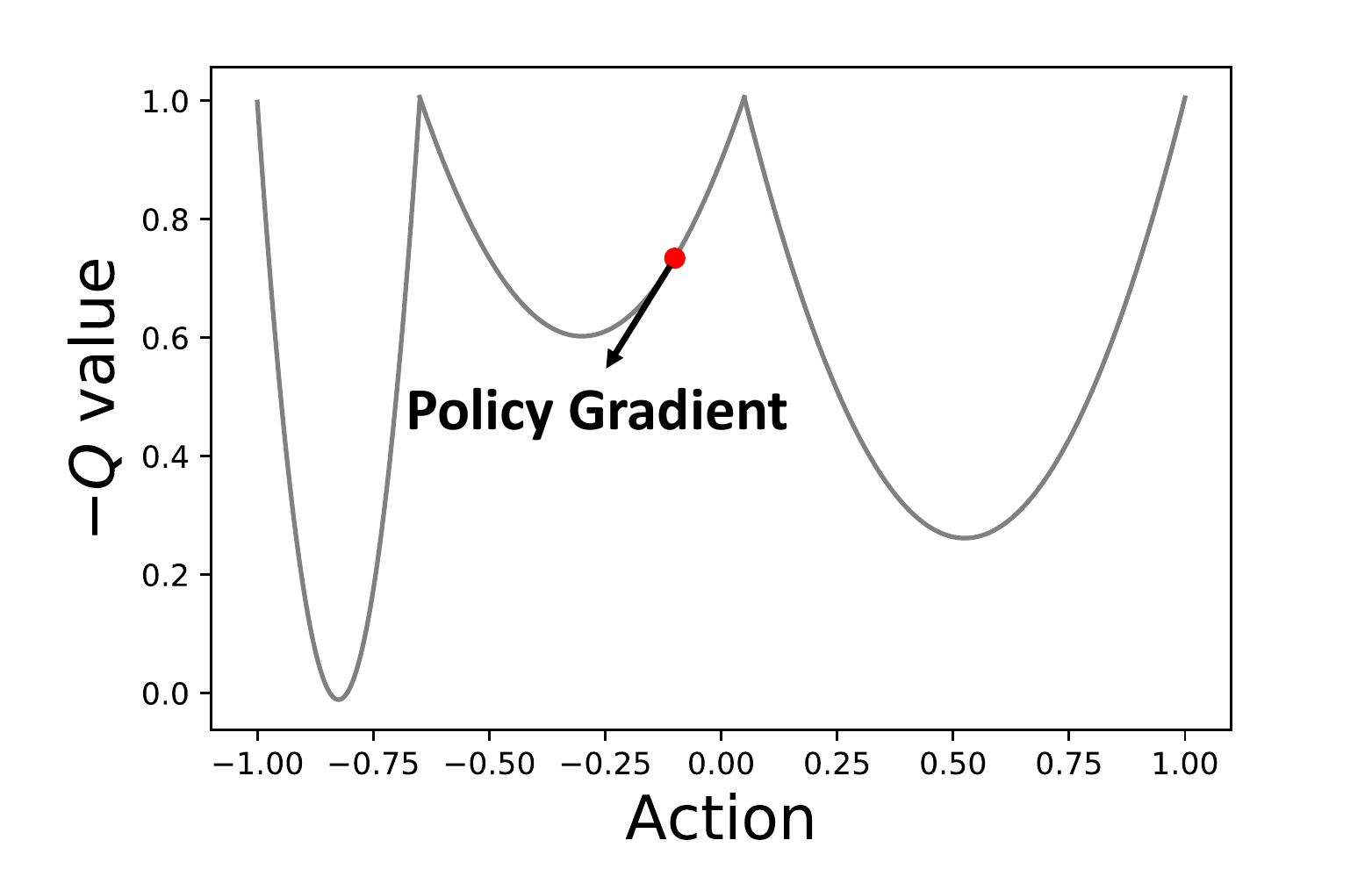}
\end{minipage}}%
\subfigure[ZOSPI]{
\begin{minipage}[htbp]{0.33\linewidth}
	\centering
	\includegraphics[width=1\linewidth]{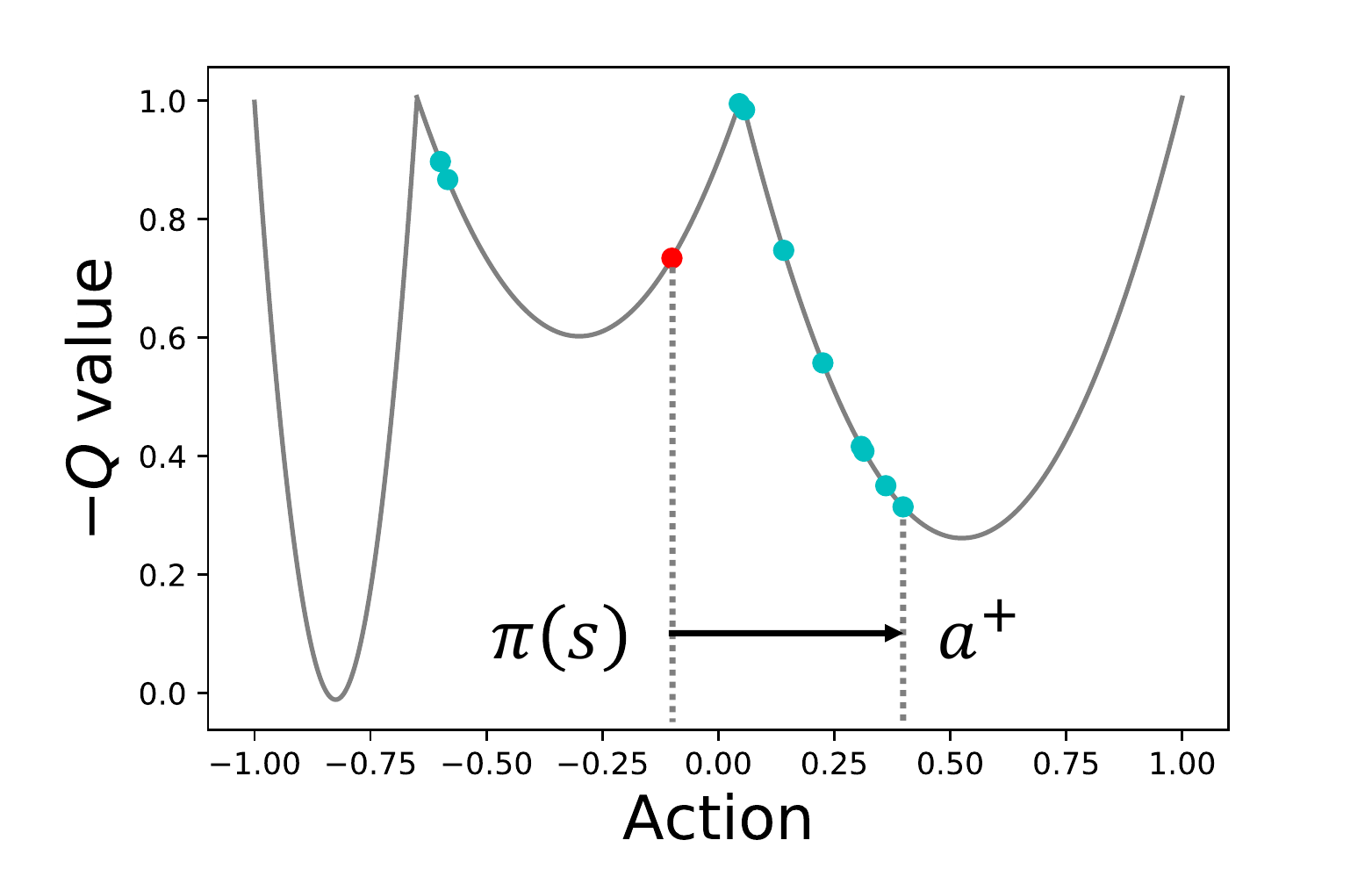}
\end{minipage}}%
\subfigure[Simulation]{
\begin{minipage}[htbp]{0.33\linewidth}
	\centering
	\includegraphics[width=1\linewidth]{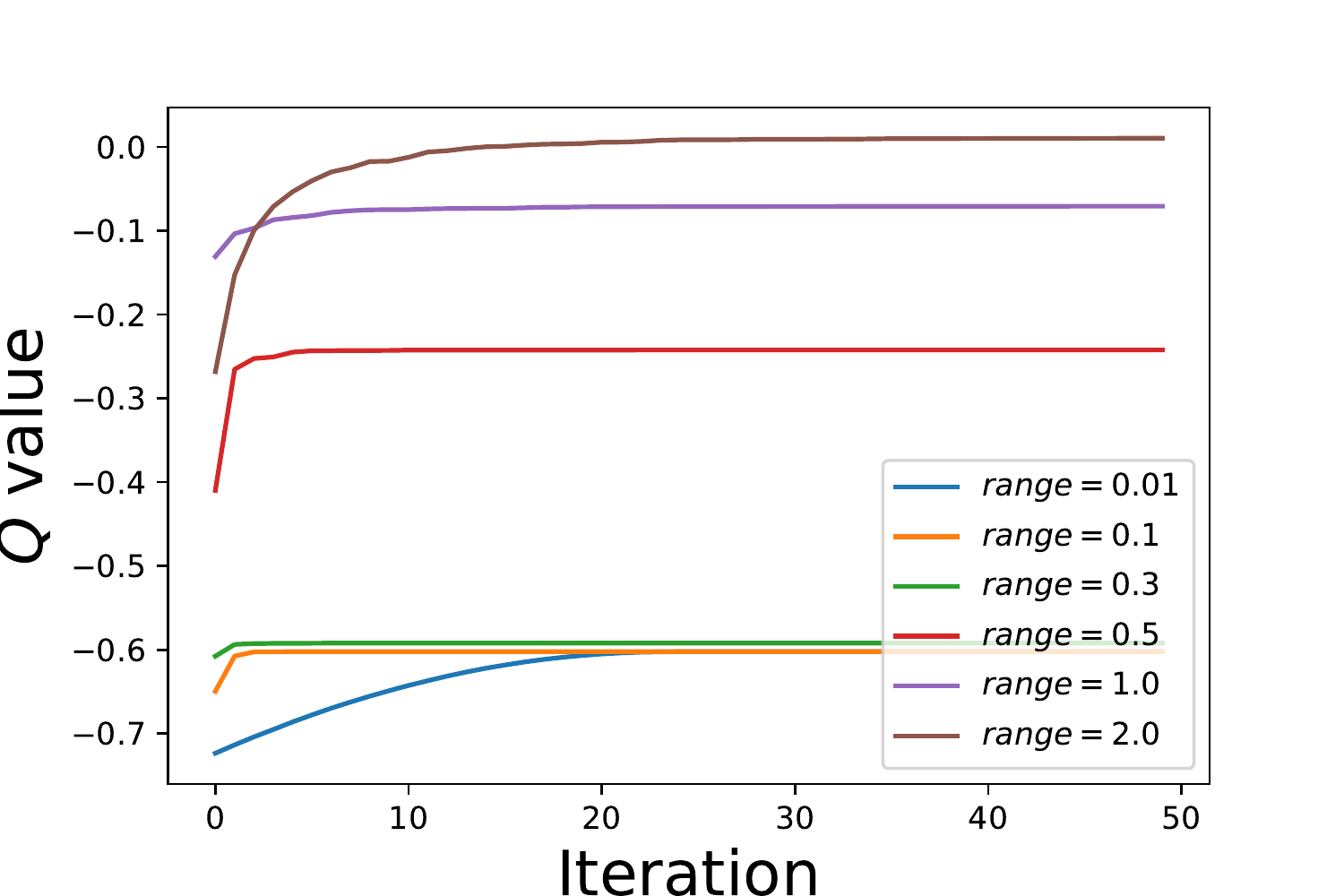}
\end{minipage}}
\caption{(a) Landscape of $Q$ value for a $1$-dim continuous control task. Policy gradient methods optimize the policy according to the local information of $Q$. (b) For the same task, ZOSPI directly updates the predicted actions to the sampled action with the largest $Q$ value. (c) Simulation results, where in each optimization iteration $10$ actions are uniformly sampled under different ranges. The reported results are averaged over $100$ random seeds. It can be seen that a larger random sample range improves the chance of finding global optima. Similar phenomenon also exist in practice as shown in Appendix \ref{vis_Q}.}
\label{fig_1}
\end{figure}

Figure~\ref{fig_1} shows a motivating example to demonstrate the benefits of applying zeroth-order optimization to policy updates. Consider we have a learned $Q$ function with multiple local optima. Here we assume the conventional estimation of $Q$ function is sufficient for a global exploitation \cite{fujimoto2018addressing,haarnoja2018soft} and we will discuss an improved estimation method in the Appendix. Our deterministic policy selects a certain action at this state, denoted as the red dot in Figure~\ref{fig_1}(a). In deterministic policy gradient methods~\cite{silver2014deterministic,lillicrap2015continuous}, the policy gradient is conducted according to the chain rule to update the policy parameter $\theta$ with regard to $Q$-value by timing up the Jacobian matrix $\nabla_\theta \pi_\theta(s)$ and the derivative of $Q$, \emph{i.e.}, $\nabla_a Q(s,a)$. Consequently, the policy gradient can only guarantee to find a local minima, and similar local improvement behaviors are also observed in stochastic policy gradient methods like PPO and SAC~\cite{schulman2017proximal,haarnoja2018soft,tessler2019distributional,ciosek2019better}. 
Instead, if we can sample sufficient {\it random} actions in a broader range of the action space, denoted as blue dots in Figure~\ref{fig_1}(b), and then evaluate their values respectively through the learned $Q$ estimator, it is possible to find a better initialization, from which the policy gradient can more likely find the global minima. Figure~\ref{fig_1}(c) shows the simulation result using different sample ranges for the sample-based optimization starting from the red point. It is clear that a larger sample range improves the chance of finding the global optima. Utilizing such a global exploitation on the learned value function is the key insight of this work. 



\subsection{Zeroth-Order Supervised Policy Improvement}

Based on the above motivation, we propose a framework called ZOSPI (Zeroth-Order Supervised Policy Improvement). ZOSPI consists of two policy networks: the global policy network that gives us a coarse estimate on the optimal action and a perturbation network that iterates based on the output of the global policy network using policy gradient.

\paragraph{Supervised learning for global policy network.} We denote the global policy network by $\pi_{\theta}$. At any step $t$, we sample a set of actions uniformly over the entire action space as well as a set of actions sampled from Gaussian distribution centered at current prediction $\pi_{\theta_t}$. We denote by $a^+_t$, the action that gives the highest $Q$ value with respect to current state $s_t$. Then we apply the supervised policy improvement that minimizes the $L_2$ distance between $a^+_t$ and $\pi_{\theta_t}(s_t)$, which gives the descent direction:
\begin{align*}
    \nabla_{\theta} \frac{1}{2}(a^+_t - \pi_{\theta_t}(s_t))^2 = (a^+_t - \pi_{\theta_t}(s_t)) \nabla_{\theta}\pi_{\theta_t}(s_t). \numberthis \label{equ:0th}
\end{align*}
The global samples can help finding the regions that are better in the whole space. The local samples from Gaussian distribution accelerate later-stage training when the prediction is accurate enough and most global samples are not as good as the current prediction.
The implementation detail is shown in Algorithm \ref{algo_zoa}.

\paragraph{Policy gradient for perturbation network.}
\begin{algorithm}[t]
	\caption{Policy Update with Zeroth-Order and First-Order Optimization}\label{algo_zoa}
	\begin{algorithmic}[1]
		\STATE \textbf{Require}
        \STATE Objective function $Q_s$, domain $\mathcal{A}$, current policy network $\pi_{\theta}$, perturbation network $\pi_{\phi}$ current point $a_0 = \pi_{\theta}(s)$, number of global samples $n_1$, number of local samples $n_2$ 
		\STATE \textbf{Global sampling}
		\STATE Sample $n_1$ points uniformly in the entire space by
		$$
		    a_{i} \sim \mathcal{U}_{\mathcal{A}}, \text{ for }  i = 1, \dots, n_1, \text{ where }\mathcal{U}_{\mathcal{A}} \text{ is the uniform distribution over } \mathcal{A}.
		$$
		\STATE \textbf{Local sampling}
		\STATE Sample $n_2$ points locally from a Gaussian centered at $a_0$ with a covariance matrix $\sigma^2 I$:
		$$
		    a_{i+n_1} \sim \mathcal{N}(a_0, \sigma^2 I), i = 1, \dots, n_2.
		$$
		\STATE Set $a^{+} = \argmax_{a \in \{a_0, \dots, a_{n_1+n_2}\}} Q_s(a) $.
		\STATE Update policy $\pi_{\theta}$ according to Eq.(\ref{equ:0th})
		\STATE \textbf{Local perturbation}
		\STATE Update perturbation network according to Eq. (\ref{equ:pertb_update}).
	\end{algorithmic}
\end{algorithm}
We introduce another perturbation network, denoted by $\pi_{\phi_t}$, as~\cite{fujimoto2018off}. Such a perturbation network is parameterized by $\phi_t$ that performs fine-grained control on top of the global policy network $\pi_{\theta_t}$. Different from the $\pi_{\theta_t}$, the perturbation network $\pi_{\phi_t}$ takes both the current state $s_t$ and the predicted action $ \pi_{\theta_t}(s_t)$ as inputs, thus the final executed action is as follows:
\begin{equation}
    a_t = \pi_{\theta_t}(s_t) + \pi_{\phi_t}(s_t,\pi_{\theta_t}(s_t)).
\end{equation}

The range of the outputs for the perturbation network is limited to $0.05$ times the value of maximal action. Therefore, it is only able to \textit{perturb} the action provided by the policy network $\pi_{\theta_t}$. 

The perturbation network $\pi_{\phi_t}$ is trained with the policy gradient:
\begin{equation}
    \nabla_\phi J = \mathbb{E}[\nabla_{a'} Q_w(s_t, a')|_{a'=\pi_{\theta_t}(s_t) + \pi_{\phi_t}(s_t)} \nabla_\phi\pi_{\phi_t}(s_t)]. 
    \label{equ:pertb_update}
\end{equation}
The intuition behind such an empirical design can be drawn from the example of Figure~\ref{fig_1}: although the global sampling step as well as the zeroth-order method helps policy optimization escape sub-optimal regions of the non-convex value function, the first order method can help to optimize the decision afterwards inside the locally convex region more efficiently. Eq. (\ref{equ:pertb_update}) shows that the updates are accessed only through the gradient, $\nabla_a Q_w$.

\subsection{Analyses on the Benefits of Global Sampling}
\label{sec:thm}
In this section, we give some analyses on the benefits of global sampling in terms of the sampling efficiency. Our analyses does not consider the improvement from the local sampling set, with which the quality of supervised learning can only be even better.
\paragraph{Error rate of the $a^+_t$.} The performance of the supervised policy improvement heavily depends on the goodness of $a^+_t$, the best action in the sampling set. By assuming the continuity of the estimated $Q$ function, we give an upper bound on the error rate $\|a^+_t - a^*(s)\|$, where $a^*(s)$ is the true optimal action given the current state $s$.
\begin{lemma}
\label{lem:GE}
Assume the estimated $Q$ function $Q_w(s_t, \cdot)$ given any $s_t$ is $L$-Lipschitz with respect to action inputs. Let the action space $\mathcal{A} \subset \mathbb{R}^{d}$ for some positive integer $d$ and assume that any action $a \in \mathcal{A}$, $\|a\|_2 \leq k$ for $k > 0$. Then with a probability at least $1-\delta$, we have
$$
    \|a^+_t - a^*(s)\|_2 \leq 2k\sqrt{d}L \left(\frac{\log^2(n/\delta)}{n}\right)^{1/d}.
$$
\label{lem:a_t}
\end{lemma}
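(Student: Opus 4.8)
The plan is to write $a^*(s)\in\argmax_{a\in\mathcal A}Q_w(s,a)$ (a maximizer exists because $Q_w(s,\cdot)$ is continuous on the bounded set $\mathcal A\subset\R^d$) and to control $\|a^+_t-a^*(s)\|_2$ through two separate ingredients. First I would establish a purely probabilistic covering estimate: with high probability at least one of the $n$ uniform samples lands inside a small ball around $a^*(s)$. Second I would argue that the best-valued sample $a^+_t$ inherits this proximity. All of the randomness, and hence the entire $n,\delta,k,d$ dependence, is produced by the first ingredient; the second is deterministic and is where the $L$-Lipschitz hypothesis enters.

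For the covering step, note that $\|a\|_2\le k$ forces $\mathcal A$ into a cube of side $2k$, which I tile by axis-aligned subcubes of side $\ell$, each of diameter $\ell\sqrt d$. Up to a dimension-dependent volume-ratio constant, a single draw $a_i\sim\mathcal U_{\mathcal A}$ lands in the cell containing $a^*(s)$ with probability at least a constant multiple of $(\ell/2k)^d$, so the probability that none of the $n$ i.i.d.\ samples hits that cell is at most $(1-c(\ell/2k)^d)^n\le\exp(-cn(\ell/2k)^d)$. Forcing this below $\delta$ and solving for $\ell$ gives $\ell\asymp 2k(\log(1/\delta)/n)^{1/d}$, whence the nearest sample $\tilde a$ satisfies $\|\tilde a-a^*(s)\|_2\le\ell\sqrt d$; carrying a union bound over the $\mathrm{poly}(n)$ cells needed to locate $a^*(s)$ upgrades $\log(1/\delta)$ to $\log^2(n/\delta)$, yielding $\|\tilde a-a^*(s)\|_2\le\rho:=2k\sqrt d\,(\log^2(n/\delta)/n)^{1/d}$ with probability at least $1-\delta$.

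It then remains to pass from the nearest sample $\tilde a$ to the maximizing sample $a^+_t$. By construction $Q_w(s,a^+_t)\ge Q_w(s,\tilde a)$, and $L$-Lipschitzness gives $Q_w(s,\tilde a)\ge Q_w(s,a^*(s))-L\rho$, so $a^+_t$ is an $L\rho$-optimizer in value. The hard part will be converting this value near-optimality into the stated Euclidean bound $\|a^+_t-a^*(s)\|_2\le L\rho$: one-sided Lipschitz continuity only upper-bounds value gaps by action distances, not the reverse, so the highest-valued sample need not be the geometrically closest one, and distant actions with nearly maximal value are not excluded by the hypotheses. I expect this to be the main obstacle, and I would resolve it by adding an inverse modulus of continuity (a local growth/sharpness condition of the form $Q_w(s,a^*(s))-Q_w(s,a)\ge c\|a-a^*(s)\|_2$ near the maximizer), which is precisely what places the extra $L$ factor outside the covering radius $\rho$ in the displayed bound. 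Without such a condition the covering step alone still delivers the $L$-free distance bound for the nearest sample together with the value-gap estimate $Q_w(s,a^*(s))-Q_w(s,a^+_t)\le L\rho$, and I would flag explicitly that the action-distance reading of the lemma needs this additional assumption.
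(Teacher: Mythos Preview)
Your overall strategy matches the paper's: build a covering of $\mathcal A$, argue that with high probability the $n$ uniform samples populate the relevant cells, and then invoke Lipschitz continuity. The paper carries out the probabilistic step a little differently from you: instead of bounding the chance of missing the single cell containing $a^*(s)$ and then appealing to a union bound, it fixes an $N$-point $\tfrac{2k\sqrt d}{N^{1/d}}$-covering, sets $N=n/\log^2(n/\delta)$, and invokes a coupon-collector estimate to conclude that \emph{every} Voronoi cell receives at least one sample with probability $\ge 1-\delta$. That coupon-collector step is precisely where the $\log^2(n/\delta)$ factor originates; your union-bound over $\mathrm{poly}(n)$ cells would produce only a single logarithm, so if you want to reproduce the stated exponent you should switch to the coupon-collector argument (though your route, needing only the one cell near $a^*(s)$, would in fact give a tighter bound).

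More importantly, the obstacle you isolate at the end is genuine, and the paper does not resolve it either: after the covering step the paper simply writes ``we apply the Lipschitz of the $Q$ function'' and declares the lemma proved. As you correctly point out, $L$-Lipschitz continuity converts action distance into value gap, not the reverse, so from $Q_w(s,a^*(s))-Q_w(s,a^+_t)\le L\rho$ one cannot conclude $\|a^+_t-a^*(s)\|_2\le L\rho$ without an additional growth/sharpness condition at the maximizer. Your diagnosis is right on target: under the stated hypotheses the argument delivers the value-gap bound (or the distance bound for the \emph{nearest} sample), and the action-distance statement for $a^+_t$ with the extra factor $L$ requires the inverse modulus you propose. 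Both your proof and the paper's share this gap; you have simply been more careful in naming it.
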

Though the dependence of $n$ is $1/n^{1/d}$, the estimation of $a_t^+$ is sufficient for a coarse prediction, because our prediction prediction $\pi_{\theta}$ aggregates all the historical information, which further reduces the error and we only need $\pi_{\theta}$ in a local convex region of the global optima. This rate can be much lower when we have a fairly better prediction and the best actions are given by the local sampling set instead.

\paragraph{Quality of $\pi_{\theta}$.} It is hard to directly evaluate the error rate of $\pi_{\theta}(s_t)$ because of the online stochastic gradient descent applied on a non-stationary distribution of states and a Q-function that is continuously being updated. Thus we take a step back and consider a simpler scenario. We assume a fixed $Q$ function and a fixed state distribution. We assume that the policy network is a linear model with $\theta \in \mathbb{R}^{p}$. We consider a global empirical risk minimizer, so called ERM, since it is easy to achieve for linear models. Let our dataset with $T$ samples be $\{(s_t, a_t^+)\}_{t = 1}^{T}$. Our ERM estimate is defined by 
$$
    \hat \theta = \argmin_{\theta \in \Theta} \sum_{t = 1}^T (a_t^+ - \pi_{\theta}(s_t))^2.
$$
\begin{theorem}
\label{thm:1}
If assumptions in Lemma \ref{lem:a_t} hold, with a high probability we have
$$
    \mathbb{E}\|\pi_{\hat\theta}(s) - a^*(s)\|_2^2 = \tilde{\mathcal{O}}\left(\frac{k \sqrt{d}pL}{n^{1/d}T} + \min_{\theta}\mathbb{E}\|\pi_{\theta}(s) - a^*(s)\|_2\right),
$$
where $\tilde{\mathcal{O}}$ hides all the constant and logarithmic terms.
\end{theorem}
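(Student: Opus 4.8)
This is an excess-risk bound for the least-squares ERM, so the plan is to combine the deterministic label-accuracy guarantee of Lemma~\ref{lem:a_t} with a fast-rate bound for linear least squares and a standard approximation/estimation split.

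First I would condition on a good event: applying Lemma~\ref{lem:a_t} at each of the $T$ updates and taking a union bound, with high probability $\|a_t^+-a^*(s_t)\|_2\le\epsilon_n$ for every $t\le T$ simultaneously, where $\epsilon_n := 2k\sqrt d L\big(\log^2(nT/\delta)/n\big)^{1/d} = \tilde{\mathcal{O}}(k\sqrt d L\, n^{-1/d})$; the extra $\log T$ is swallowed by $\tilde{\mathcal{O}}$, and one may assume $\epsilon_n\le 1$ since otherwise the bound is trivial. On this event write $a_t^+ = a^*(s_t)+\eta_t$ with $\|\eta_t\|_2\le\epsilon_n$, i.e.\ the ERM is fed a bounded perturbation of the true regression target $a^*(\cdot)$. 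Let $\theta^*\in\argmin_\theta\mathbb{E}_s\|\pi_\theta(s)-a^*(s)\|_2^2$ and set $\mathrm{App}:=\mathbb{E}_s\|\pi_{\theta^*}(s)-a^*(s)\|_2^2$; since every action has norm at most $k$, evaluating the squared objective at the minimizer of the \emph{unsquared} error gives $\mathrm{App}\le 2k\min_\theta\mathbb{E}_s\|\pi_\theta(s)-a^*(s)\|_2$, which is the approximation term in the theorem up to a factor $2k$ absorbed into $\tilde{\mathcal{O}}$.

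Next I would treat each of the $d$ action coordinates as an ordinary linear regression with a bounded feature map and bounded parameter set (both may be taken bounded without loss of generality once the fixed $Q$ and state distribution are fixed), bounded response, and ``noise'' $\xi(s_t)+\eta_t$ — where $\xi(s):=a^*(s)-\pi_{\theta^*}(s)$ is the misspecification — of second moment at most $2(\mathrm{App}+\epsilon_n^2)$. A standard random-design fast-rate bound for bounded least squares (the squared loss satisfies the Bernstein/small-ball condition over a bounded linear class, and the empirical design covariance concentrates) then gives, with high probability,
$$
   \mathbb{E}_s\|\pi_{\hat\theta}(s)-\pi_{\theta^*}(s)\|_2^2 = \tilde{\mathcal{O}}\!\Big(\tfrac{p\,(\mathrm{App}+\epsilon_n^2)}{T}\Big).
$$
Combining with the triangle inequality $\|\pi_{\hat\theta}(s)-a^*(s)\|_2^2\le 2\|\pi_{\hat\theta}(s)-\pi_{\theta^*}(s)\|_2^2+2\mathrm{App}$, using $p\,\mathrm{App}/T\le\mathrm{App}$, $\epsilon_n^2\le\epsilon_n=\tilde{\mathcal{O}}(k\sqrt d L\,n^{-1/d})$, and the bound on $\mathrm{App}$ above, produces exactly the stated estimate, with the $n^{-1/d}$ factor coming entirely from Lemma~\ref{lem:a_t} and the $p/T$ factor from the least-squares estimation error.

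The hard part will be the least-squares step: getting the \emph{fast} $p/T$ rate rather than $\sqrt{p/T}$ under \emph{misspecification}, while the perturbation $\eta_t$ is adversarial and sampling-dependent rather than i.i.d.\ zero-mean. The cleanest workaround exploits linearity of the least-squares solution in the labels: decompose $\pi_{\hat\theta}$ into the predictor fit to labels $\pi_{\theta^*}(s_t)+\xi(s_t)$ (handled by the usual misspecified-least-squares fast rate, contributing $\tilde{\mathcal{O}}(p\,\mathrm{App}/T)$) and the predictor fit to labels $\eta_t$, which is bounded by $\tilde{\mathcal{O}}(p\epsilon_n^2/T)$ directly from $\|\eta_t\|_2\le\epsilon_n$ together with a lower bound on the smallest eigenvalue of the empirical design covariance. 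Everything else — the union bound, the triangle inequalities, and the $\mathrm{App}$ manipulation — is routine; if one wants to sidestep the regularity conditions entirely, a slower $\sqrt{p/T}$ version follows from a one-line uniform-convergence argument over the linear class, matching the theorem only in its dependence on $n$ and the approximation error.
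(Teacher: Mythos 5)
Your overall architecture is the same as the paper's: fix the $Q$ function and state distribution, view the labels $a_t^+$ as perturbed versions of $a^*(s_t)$ with perturbation size controlled by Lemma~\ref{lem:a_t}, write the ERM as ordinary least squares, split the risk into an estimation term against the best linear predictor $\theta^*$ and an approximation term, and let the estimation term carry the $p/T$ factor while the $n^{-1/d}$ factor comes from Lemma~\ref{lem:a_t}. Your union bound over the $T$ steps and your reconciliation of the squared left-hand side with the unsquared approximation term via $\mathbb{E}\|\pi_{\theta^*}(s)-a^*(s)\|_2^2\le 2k\min_\theta\mathbb{E}\|\pi_\theta(s)-a^*(s)\|_2$ are finer points that the paper's proof glosses over, and they are welcome.

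The genuine gap is in the step you yourself flag as the hard part. The paper resolves it by \emph{assuming} that the label errors $\epsilon_t=a_t^+-a^*(s_t)$ are independent across $t$ and mean zero (stated explicitly as a simplifying assumption), so the perturbation enters only through its variance and the classical OLS variance bound gives the $\mathcal{O}(\sigma^2 p/T)$ term. You instead keep the perturbations adversarial and sampling-dependent and claim the component of the OLS fit driven by the labels $\eta_t$ contributes $\tilde{\mathcal{O}}(p\epsilon_n^2/T)$ ``directly from $\|\eta_t\|_2\le\epsilon_n$ together with a lower bound on the smallest eigenvalue of the empirical design covariance.'' That argument does not give a $1/T$ rate: writing $\hat\theta_\eta=(S^\top S)^{-1}S^\top H$ with $\|H\|_F\le\sqrt{T}\,\epsilon_n$ and $\lambda_{\min}(S^\top S)\gtrsim T\lambda_{\min}(\Sigma)$ yields only $\|\hat\theta_\eta\|_F\lesssim\epsilon_n/\sqrt{\lambda_{\min}(\Sigma)}$, hence a prediction-error contribution of order $\epsilon_n^2$ (up to the condition number), with no decay in $T$; and this is tight, since an adversary may choose $\eta_t$ proportional to a fixed linear function of $s_t$, which OLS reproduces exactly as a bias of magnitude $\epsilon_n$. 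Consequently your route, made rigorous, proves a bound of the form $\tilde{\mathcal{O}}(p\,\mathrm{App}/T+\epsilon_n^2+\mathrm{App})$, and the $\epsilon_n^2\approx k^2dL^2 n^{-2/d}$ term is not dominated by the theorem's $k\sqrt{d}pL/(n^{1/d}T)$ when $T$ is large, so the stated rate is not recovered. To match the theorem you either need the paper's zero-mean/independence assumption on $\epsilon_t$ (in which case the variance argument does give $p\epsilon_n^2/T$, in fact slightly better than the printed bound, whose $\sigma^2$ is really the norm bound $\epsilon_n$), or an explicit argument bounding the bias induced by the sampling-dependent perturbations, which neither you nor the paper supplies.
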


Theorem \ref{thm:1} implies that though the error of $a_t^*$ at one step can be high, the error of our policy network can be further reduced by aggregating information from multiple steps. However, when $\pi_{\theta}$ is nonlinear functions, the quality may be worse than the case in our analyses. That is why we  need some local perturbation through policy gradient to further improve our policy. Missing proofs in this section are given by Appendix \ref{app:proof}.

\subsection{Multi-Modal Continuous Control with ZOSPI}
\label{sec_extension}
Different from standard policy gradient methods, the policy optimization step in ZOSPI can be considered as sampling-based supervised learning. Such a design enables many extensions of the proposed learning paradigm. In this section, we introduce the combination of ZOSPI and first-order method for stabilized training and the combination of ZOSPI and MDN for multi-modal policy learning.

\subsubsection{Learning Multi-Modal Policies with Mixture Density Networks}
\label{sec_mdn}
\begin{wrapfigure}{l}{7cm}
\vskip -0.1in
\centering
\includegraphics[width=0.5\columnwidth]{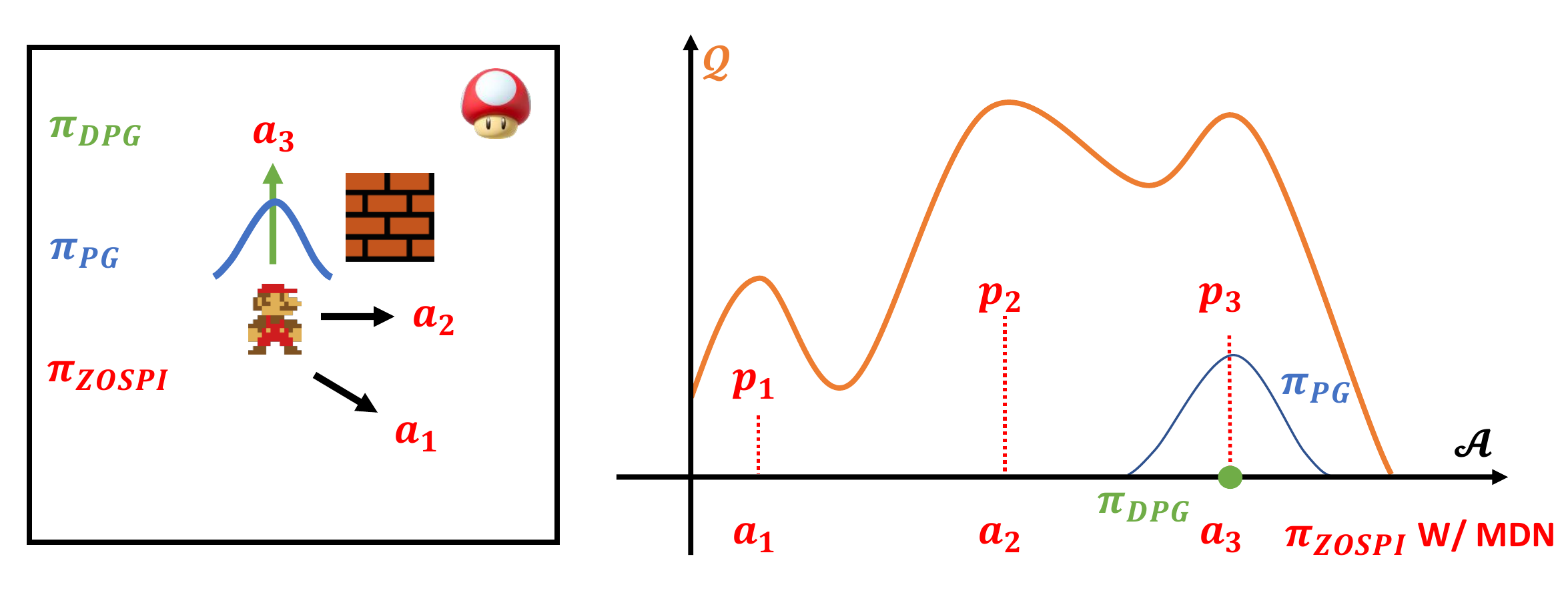}
\vskip -0.1in
\caption{Illustration of how ZOSPI with MDN works when multiple optimal actions exist for a certain state: (left) Supermario is going to collect the mushroom, both $a_2$ and $a_3$ are the optimal actions for the current location. While deterministic policy gradient methods (green color) are only able to learn one of those optimal actions, ZOSPI with MDN is able to learn both. Policy gradient methods normally learn a Gaussian policy class.}
\label{fig_mdn}
\end{wrapfigure}

In the context of RL, there might be multiple optimal actions for some certain states, e.g., stepping up-ward and then right-ward may lead to the same state as stepping right and then upward, therefore both choices result in identical return in Figure~\ref{fig_mdn}. However, normal deterministic policy gradient methods can not capture such multi-modality due to the limitation that the DPG theorem is not applicable to a stochastic policy class.

In this section, we further integrate ZOSPI with the Mixture Density Networks (MDNs)~\cite{bishop1994mixture}, which was introduced for multi-modal regression.
Applying MDNs to ZOSPI leads to a more flexible and interpretable multi-modal policy class~\cite{tessler2019distributional}: different from normal Dirac policy parameterization used in TD3, ZOSPI with MDN predicts a mixture of Dirac policies, $i.e.$, the policy $\pi_{\mathrm{MDN}}(s)$ predicts $K$ choices for action $\vec{a}=\{a_1,...,a_K\}\in\mathcal{A}^{K}$, with their corresponding probability $\vec{p}=\{p_1,...,p_K\}\in \mathbb{R}^K$, and $\sum_{i}^K p_i = 1$. Then the action is sampled by
\begin{equation}
\pi_{\mathrm{MDN}}(s) = a_i, ~~\text{w.p.} ~~p_i,~ \text{for} ~~i = 1,...,K
\end{equation}
Thereby, instead of learning the mean value of multiple optimal actions, ZOSPI with MDN is able to learn multiple optimal actions. Different from previous stochastic multi-modal policies discussed in~\cite{haarnoja2018soft}, the learning of multi-modal policy does not aim to fit the entire $Q$ function distribution. Rather, ZOSPI with MDN focuses on learning the multi-modality in the best choices of actions, thus addresses the difficulty in generating multi-modal policies~\cite{haarnoja2017reinforcement,tessler2019distributional}. Figure~\ref{fig_mdn} illustrates the difference between ZOSPI with MDN and the previous policy gradient methods.

\begin{algorithm}[t]
	\caption{Zeroth-Order Supervised Policy Improvement (ZOSPI) }\label{algo_SPI}
	\begin{algorithmic}[1]
		\STATE \textbf{Require} \\
		~~ Number of epochs $M$, size of mini-batch $N$, momentum $\tau > 0$. \\
		~~ Random initialized policy network $\pi_{\theta}$, target policy network $\pi_{\theta'}$, $\theta'\leftarrow  \theta$. \\
		~~ Two random initialized $Q$ networks, and corresponding target networks, parameterized by $w_{1},w_{2},w'_{1},w'_{2}$. $w'_{i}\leftarrow w_{i}$.\\
		~~ Empty experience replay buffer $\mathcal{D} = \{\}$.
		\FOR{iteration $= 1,2,...$}
		\FOR{t $= 1,2,..., T$}
		\STATE $\#$ Interaction
		\STATE Run policy $\pi_{\theta}$ in environment, store transition tuples $(s_t,a_t,s_{t+1},r_t)$ into $\mathcal{D}$.
		\FOR{epoch $= 1,2,..., M$}
		\STATE Sample a mini-batch of transition tuples $\mathcal{D}^{\prime} = \{(s_{t_j},a_{t_j},s_{t_j+1},r_{t_j})\}_{j = 1}^{N}$.
		\STATE $\#$ Update $Q$
		    \STATE Calculate target $Q$ value $y_j = r_{t_j} + \min_{i=1,2} Q_{w^{\prime}_{i}}(s_{t_j+1},\pi_{\theta^{\prime}}(s_{t_j}))$.
		    \STATE Update $w_{i}$ with one step gradient descent on the loss $\sum_{j}(y_{j} - Q_{w^{\prime}_{i,}}(s_{t_j},a_{t_j}))^2$, $i=1,2$.
		\STATE $\#$ Update $\pi$
		\STATE Call Algorithm \ref{algo_zoa} for policy optimization to update $\theta$.
		\ENDFOR
		\STATE $\theta'\leftarrow \tau \theta + (1-\tau) \theta'$; $w_{i}'\leftarrow \tau w_{i} + (1-\tau) w_{i}'$
		\ENDFOR
		\ENDFOR
	\end{algorithmic}
\end{algorithm}

\section{Experiments}
In this section, we conduct experiments on five MuJoCo locomotion benchmarks to demonstrate the effectiveness of the proposed method. Specifically, we validate the following statements: 
\begin{enumerate}
    \item If we use ZOSPI with locally sampled actions, the performance of ZOSPI should be the same as its policy gradient counterpart like TD3; if we increase the sampling range, ZOSPI can better exploit the $Q$ function thus find better solution than the policy gradient methods.
    \item If we continuously increase the sampling range, it will result in an uniform sampling, and the $Q$ function can be maximally exploited. 
    \item The perturbation network can help to improve the sample efficiency of the primal ZOSPI frame work purely based on zeroth-order optimization.
    \item The supervised learning policy update paradigm of ZOSPI permits it to be flexibly work with multi-modal policy class.
\end{enumerate}



\subsection{ZOSPI on the MuJoCo Locomotion Tasks.}
\begin{table}[t]
\caption{Quantitative results on asymptotic performance. Our proposed method achieved comparable performance with only half the number of interactions with the environment, and achieves superior performance for $1$M interactions.}
\label{table_tasks}
\begin{center}
\scriptsize
\begin{sc}
\begin{tabular}{lccccc}
\toprule
Method/Task & Hopper-v2 & Walker2d-v2 & HalfCheetah-v2 & Ant-v2 & Humanoid-v2 \\
\midrule
TD3  & $2843 \pm 197$ & $3842 \pm 239$ & $10314\pm 93$ & $4868\pm 388$ & $4855 \pm 263$\\
SAC & $2158\pm 388$& $4154 \pm 333$ & $8735 \pm 170$ & $3051\pm 469$ & $\mathbf{5012 \pm 478}$ \\
OAC& $2983 \pm 317$ & $3075 \pm 183$ & $4497 \pm 296$& $4497 \pm297$& $4624 \pm 351$\\
Ours-$0.5$M & $3048 \pm 307$ & $3932\pm 125$& $10290\pm129$ &$4304\pm 260$ & $4175 \pm 302$\\
Ours & $\mathbf{3268 \pm 234}$  & $\mathbf{4027 \pm 28}$  & $\mathbf{10992 \pm 126}$  & $\mathbf{5006\pm 135}$ & $4881 \pm 164$ \\
Improv. Over TD3/SAC & $\uparrow15\%/\uparrow51\%$ & $\uparrow5\%/\downarrow3\%$ & $\uparrow7\%/\uparrow26\%$ & $\uparrow3\%/\uparrow64\%$ &$\uparrow1\%/\downarrow3\%$ \\
\bottomrule
\end{tabular}
\end{sc}
\end{center}
\end{table}
\begin{figure}[t]
\centering
\begin{minipage}[htbp]{0.33\linewidth}
	\centering
	\includegraphics[width=1\linewidth]{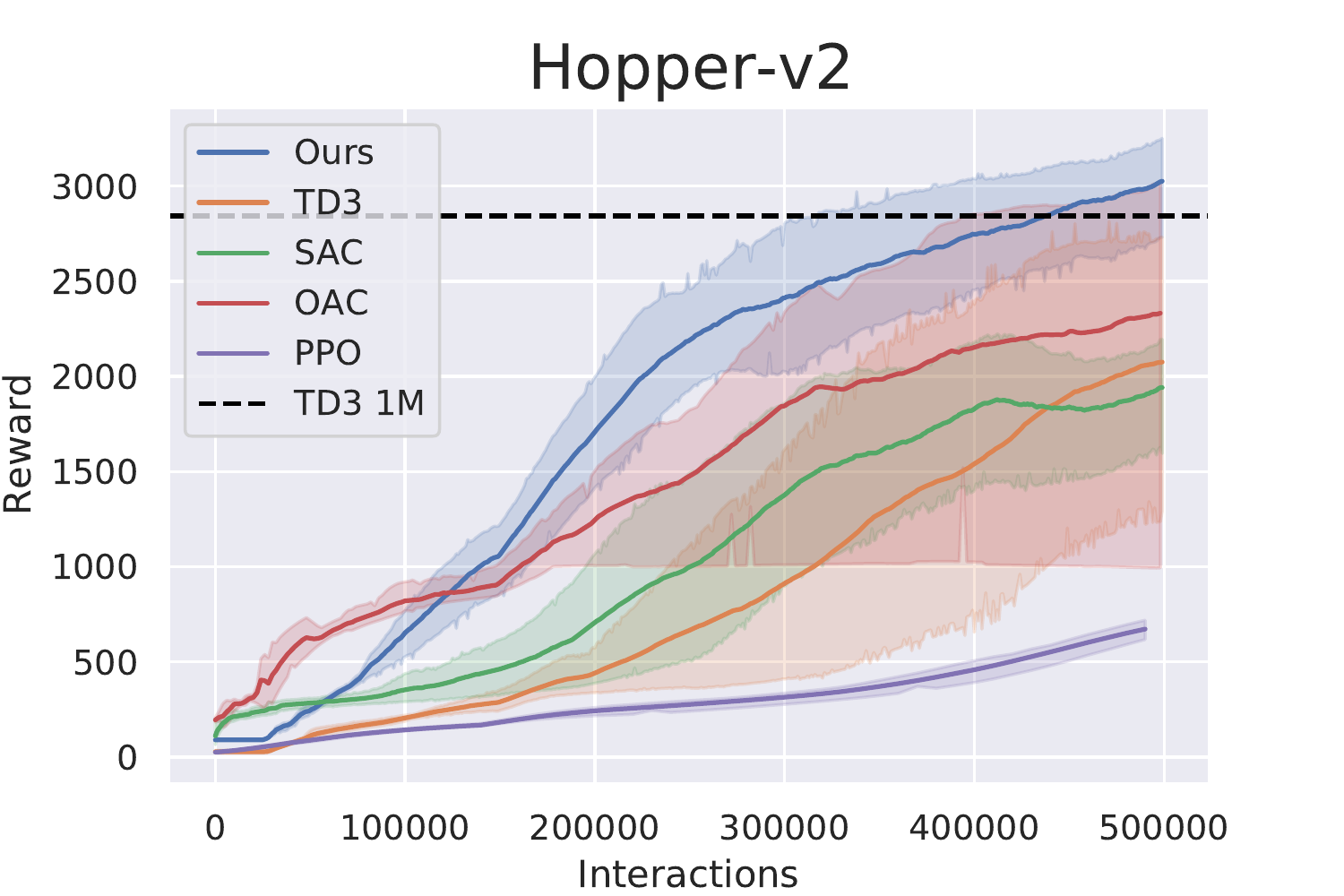}
\end{minipage}%
\begin{minipage}[htbp]{0.33\linewidth}
	\centering
	\includegraphics[width=1\linewidth]{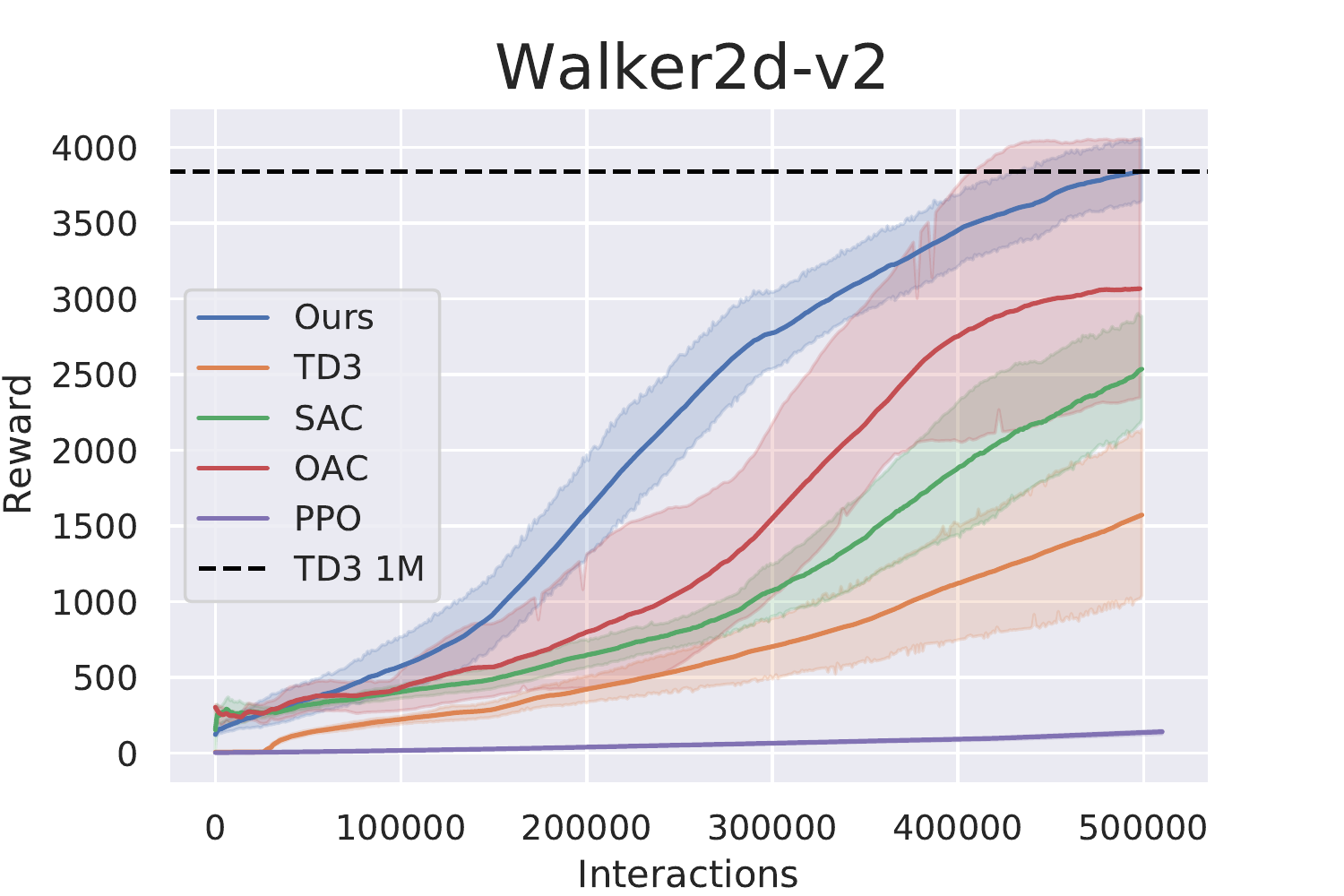}
\end{minipage}%
\begin{minipage}[htbp]{0.33\linewidth}
	\centering
	\includegraphics[width=1\linewidth]{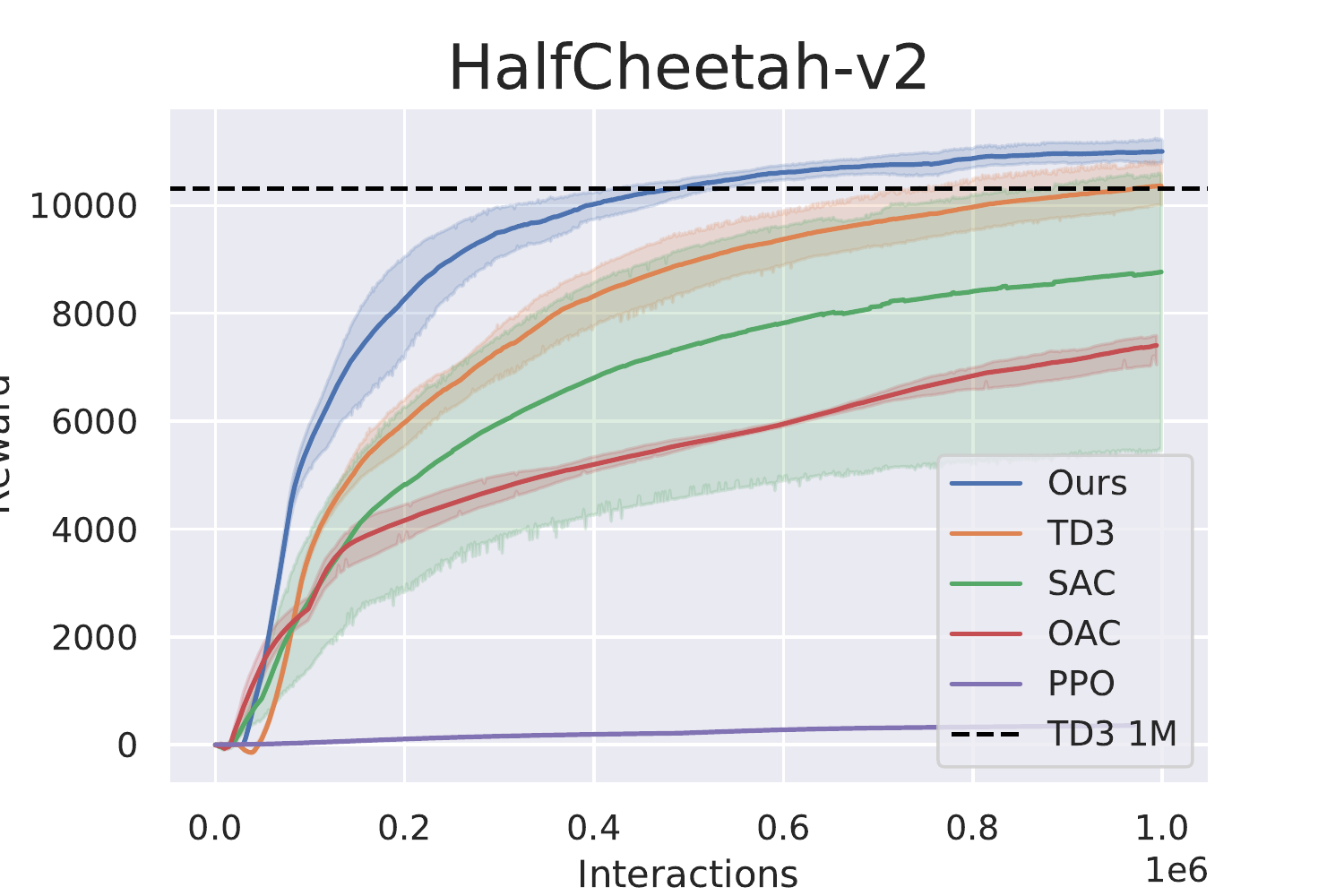}
\end{minipage}\\%
\begin{minipage}[htbp]{0.33\linewidth}
	\centering
	\includegraphics[width=1\linewidth]{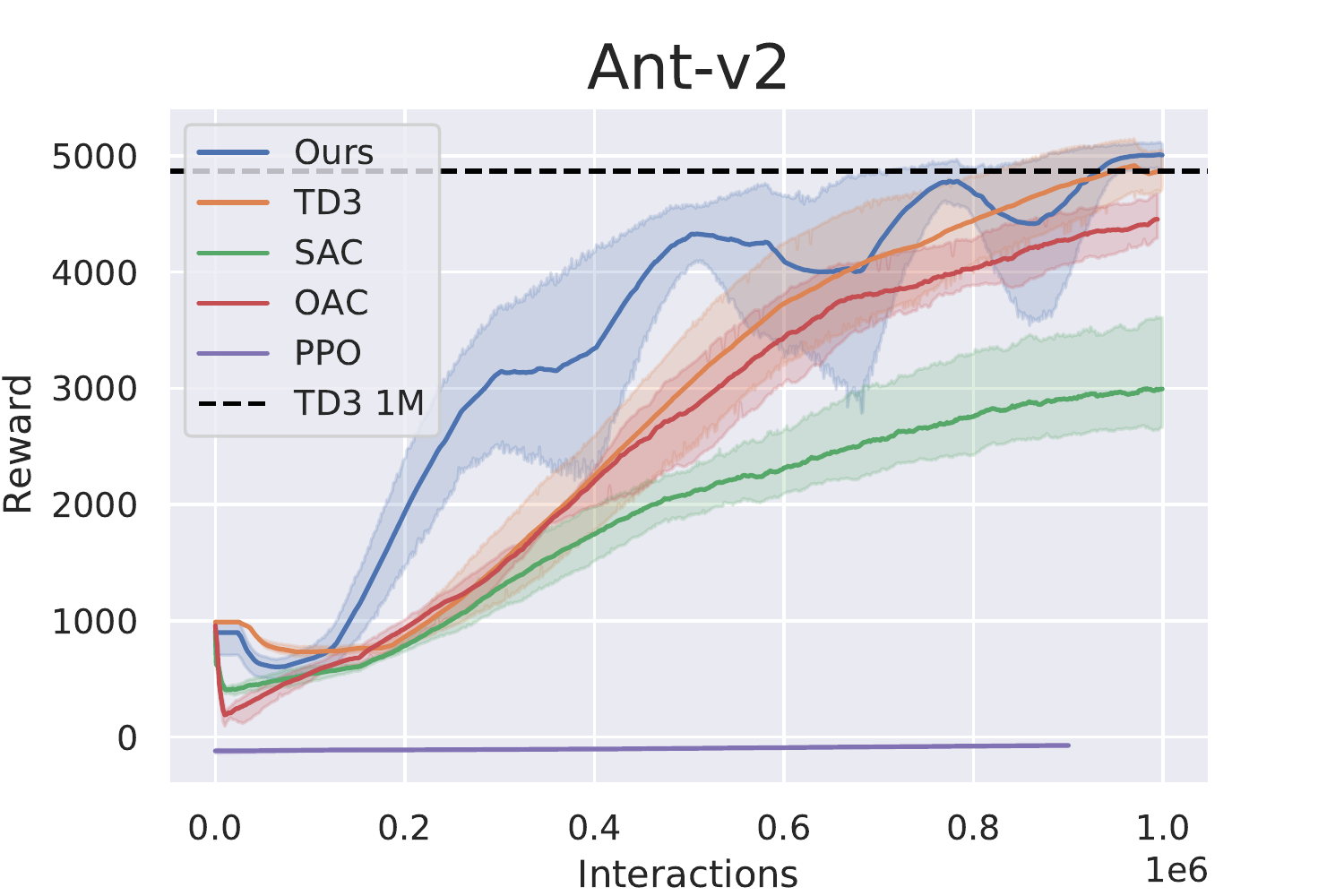}
\end{minipage}%
\begin{minipage}[htbp]{0.33\linewidth}
	\centering
	\includegraphics[width=1\linewidth]{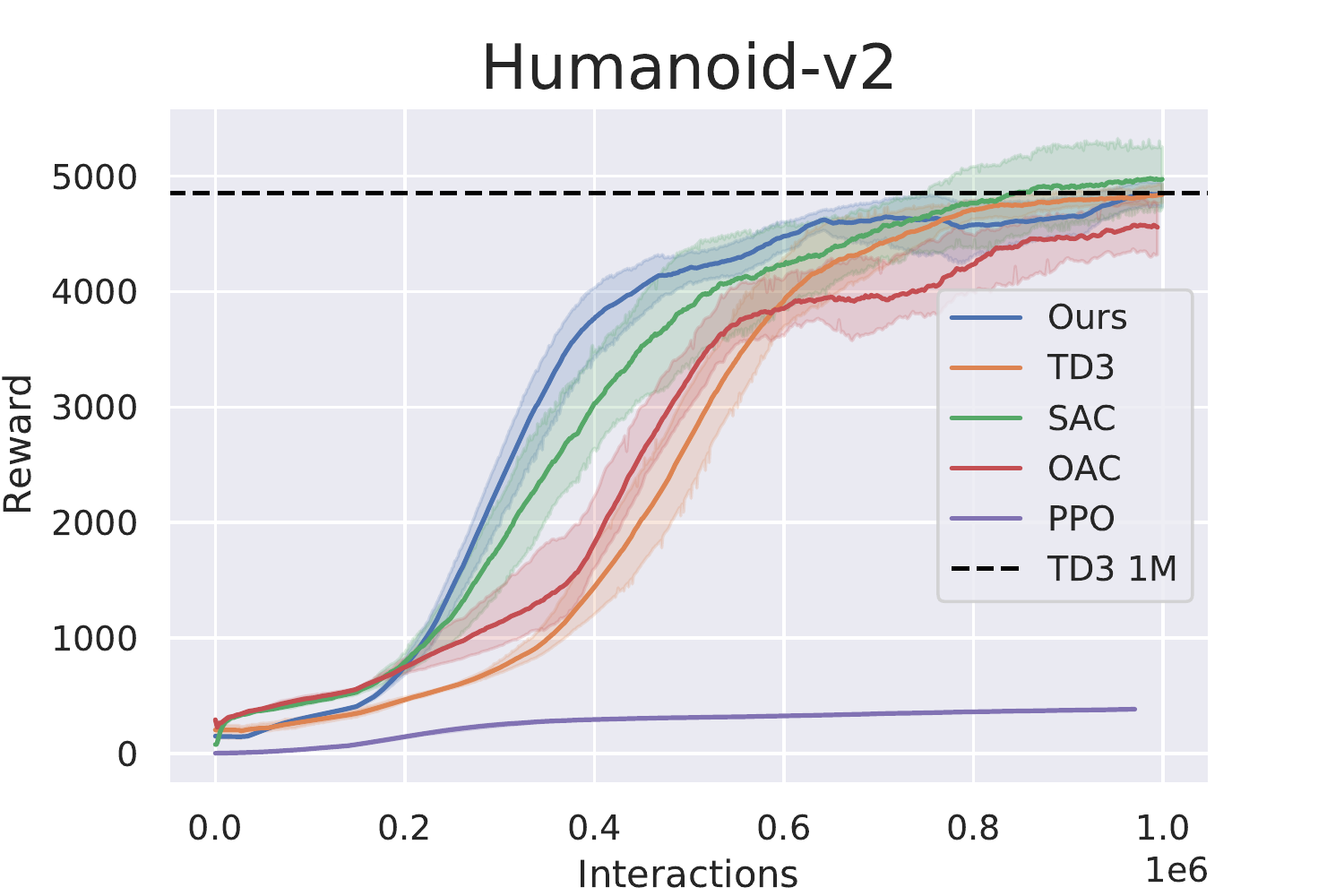}
\end{minipage}%
\begin{minipage}[htbp]{0.33\linewidth}
	\centering
	\includegraphics[width=1\linewidth]{figs/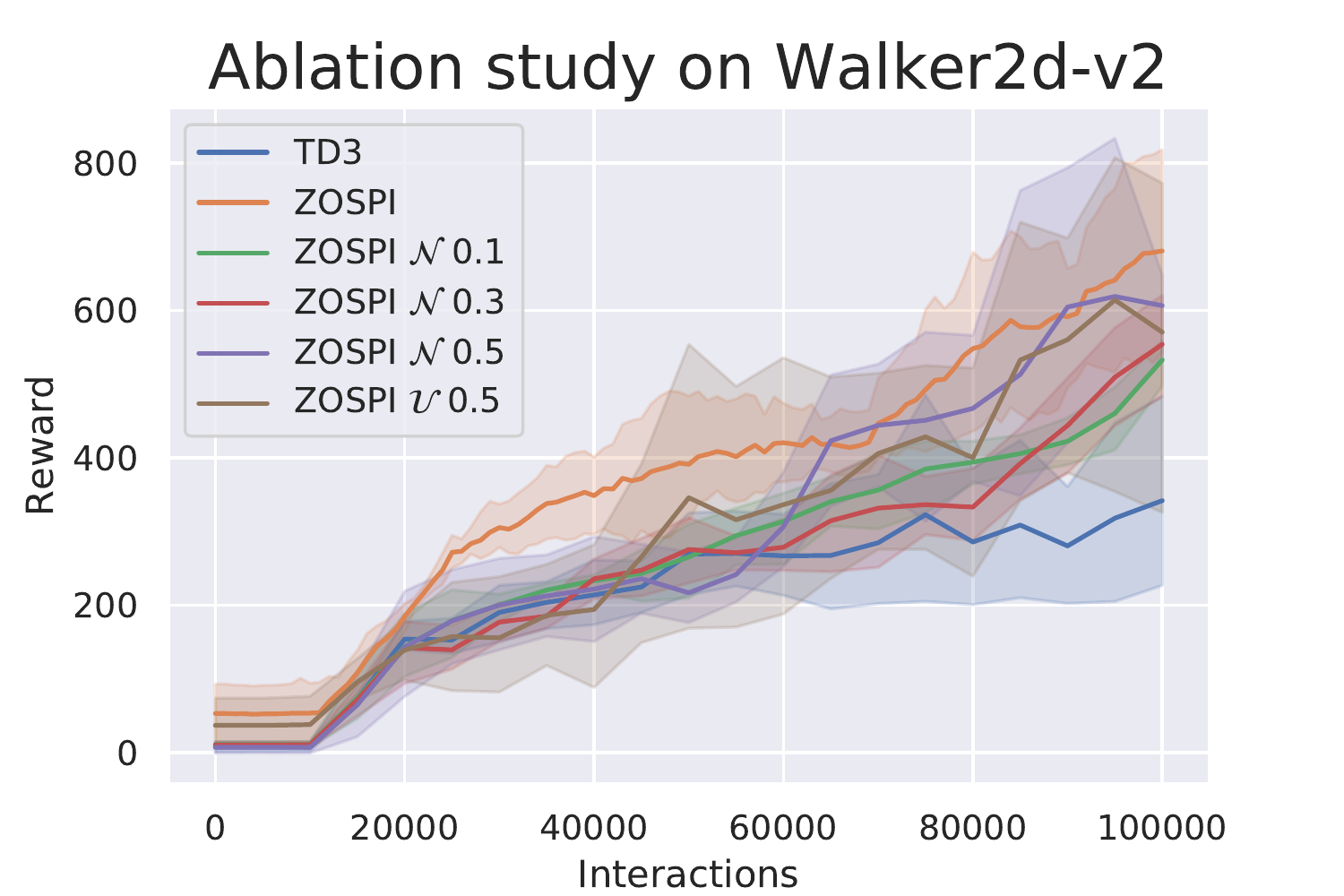}
\end{minipage}%

\caption{Experimental results on the MuJoCo locomotion tasks. The shaded region represents half a standard deviation. The dashed lines indicate asymptotic performance of TD3 after $1$M interactions. ZOSPI is able to reach on-par performance within much less interactions. In all main experiments the results reported are collected from $10$ random seeds and in ablations studies we use $5$ random seeds. Curves are smoothed uniformly for visual clarity.
}
\label{mujoco_results}
\end{figure}
We evaluate ZOSPI on the Gym locomotion tasks based on the MuJoCo engine~\cite{1606.01540,TodorovET12}. 
The five locomotion tasks are Hopper-v2, Walker2d-v2, HalfCheetah-v2, Ant-v2, and Humanoid-v2. We compare our method with TD3 and SAC, the deterministic and the stochastic SOTA policy gradient methods. We also include PPO and OAC~\cite{ciosek2019better} to better show the learning efficiency of ZOSPI. We compare different methods within $0.5$M environment interactions in the two easy tasks to demonstrate the high learning efficiency of ZOSPI, and present results during $1$M interactions for the other tasks. 
The results of TD3 and OAC are obtained by running the code released by the authors, the results of PPO are obtained through the high quality open-source implementation~\cite{baselines}, and the results of SAC are extracted from the training logs of~\cite{haarnoja2018soft}. 

The learning curves of those tasks are shown in Figure~\ref{mujoco_results}. It is worth noting that in all of the results we reported, only $50$ actions are sampled and it is sufficient to learn well-performing policies. With a high sampling efficiency, ZOSPI works well in the challenging environments that have high-dimensional action spaces such as Ant-v2 and Humanoid-v2. In all the tasks the sample efficiency is consistently improved over TD3, which is the DPG counterpart of ZOSPI. 
While a total of $50$ sampled actions should be very sparse in the high dimensional space, we attribute the success of ZOSPI to the generality of the policy network as well as the sparsity of meaningful actions. For example, even in the tasks that have high dimensional action spaces, only limited dimensions of the action are crucial.

We provide quantitative comparison in Table~\ref{table_tasks}. We report both the averaged score and the standard deviation of different methods after $1$M step of interactions with the environment. Moreover, we provide the quantitative results of ZOSPI with $0.5$M interactions during training to emphasize on its high sample efficiency. In $4$ out of the $5$ environments (except Humanoid), ZOSPI is able to achieve on-par performance compared to TD3 and SAC, with only half number of interactions. The last line of Table~\ref{table_tasks} reports the improvement of ZOSPI over TD3 and SAC respectively. All results are collected by experimenting with $10$ random seeds.

The last plot in Figure~\ref{mujoco_results} shows the ablation study on the sampling range in ZOSPI, where a sampling method based on a zero-mean Gaussian is applied and we gradually increase its variance from $0.1$ to $0.5$. We also evaluate the uniform sampling method with radius of $0.5$, which is denoted as $\mathcal{U}$ $0.5$ in the Figure.
The results suggest that zeroth-order optimization with local sampling performs similarly to the policy gradient method, and increasing the sampling range can effectively improve the performance.
\subsection{Ablation Study}
In this section, we conduct a series of experiments as the ablation study to evaluate different components of ZOSPI. Specifically, we investigate the performance of ZOSPI under different number of samples, with or without perturbation networks, and Number of Diracs (NoD) used in the policy network.

\paragraph{Number of Samples Used in ZOSPI} The first column of Figure~\ref{ablation_results} shows the performance of ZOSPI when using different number of samples. In both environments, more samples lead to better performance, but in the easier task of Hopper, there are only tiny differences, while in the task of Walker2d, using more than $50$ samples noticeably improves the performance. Trading off the improvements and the computational costs when using more samples, we choose to use $50$ samples for the experiments we reported in the previous section. 

\begin{figure}[t]
\centering
\begin{minipage}[b]{0.32\linewidth}
\label{four_way_maze}
\includegraphics[width=1.0\linewidth]{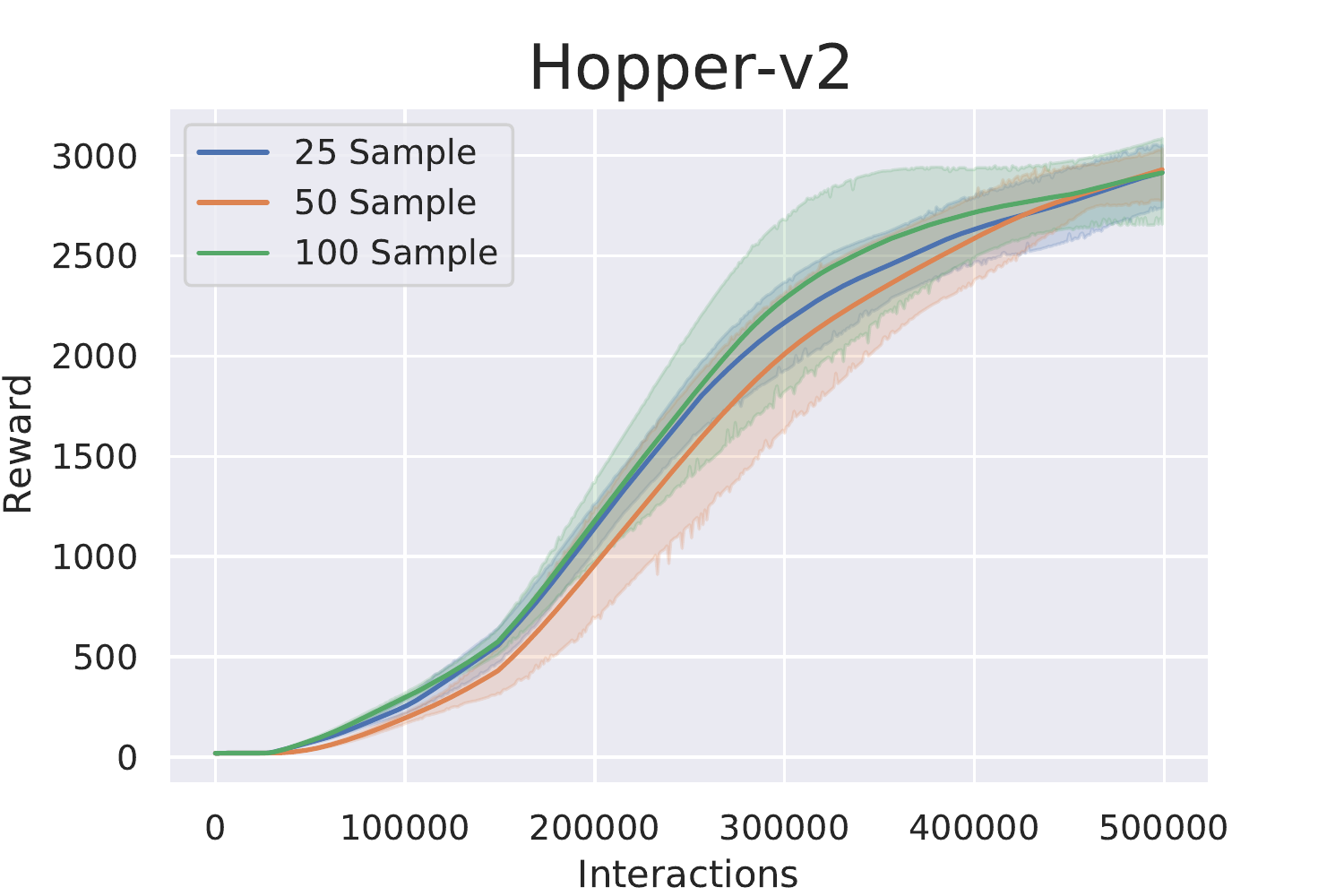}
\end{minipage}
\begin{minipage}[b]{0.32\linewidth}
\includegraphics[width=1.0\linewidth]{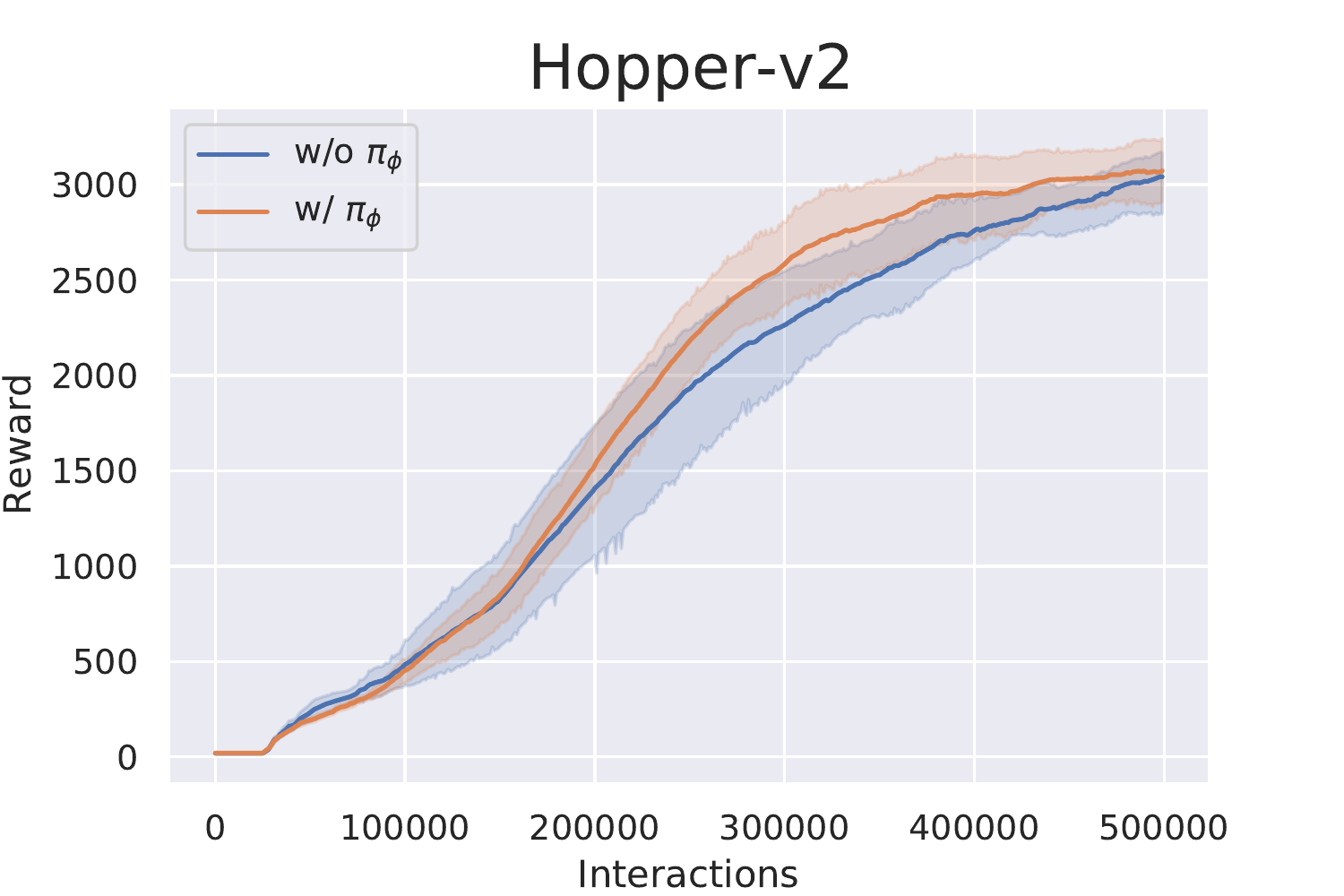}
\end{minipage}
\begin{minipage}[b]{0.32\linewidth}
\label{toy_curve}
\includegraphics[width=1.0\linewidth]{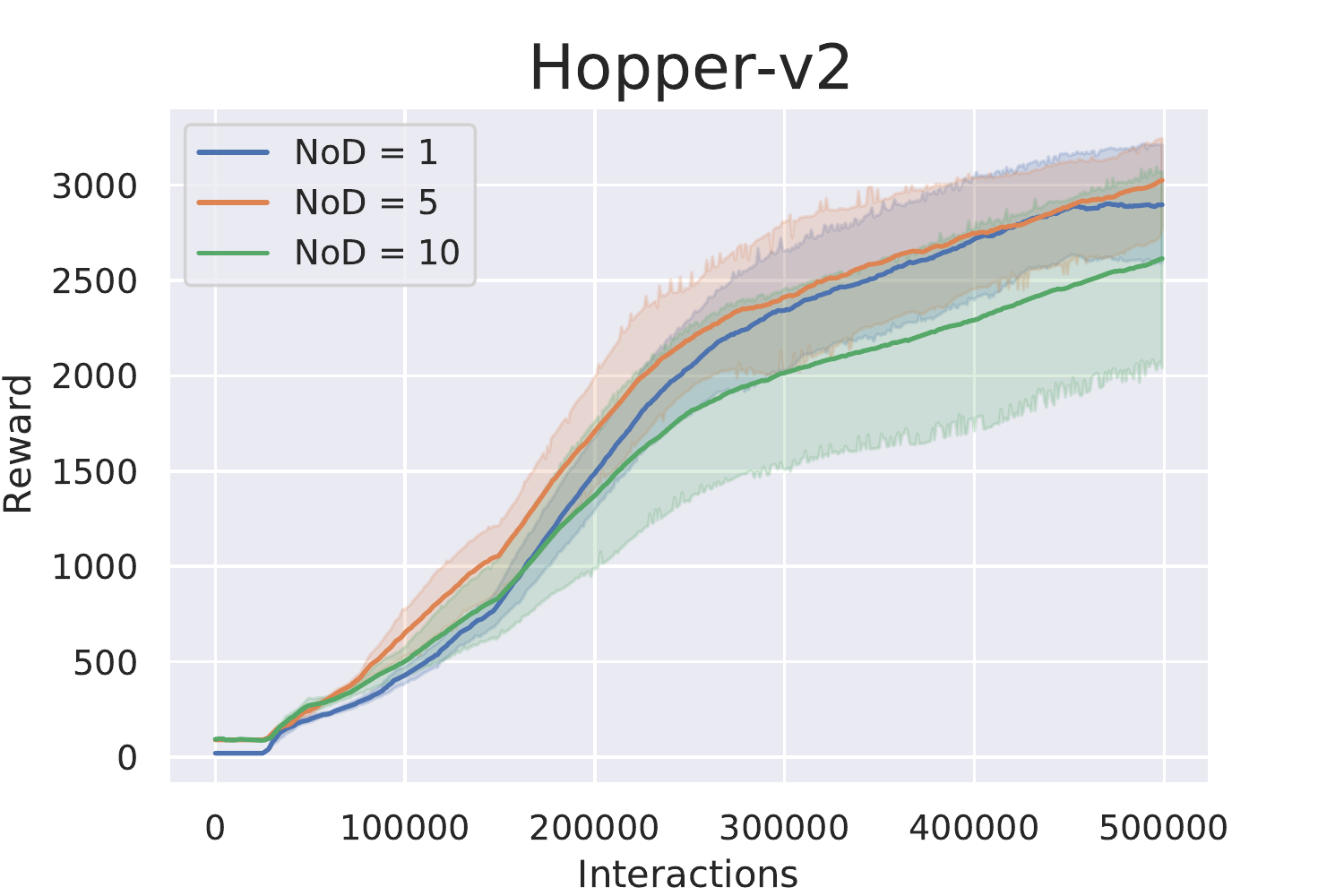}
\end{minipage}\\
\subfigure[Number of Samples]{
\begin{minipage}[b]{0.32\linewidth}
\label{vis_toy_start}
\includegraphics[width=1.0\linewidth]{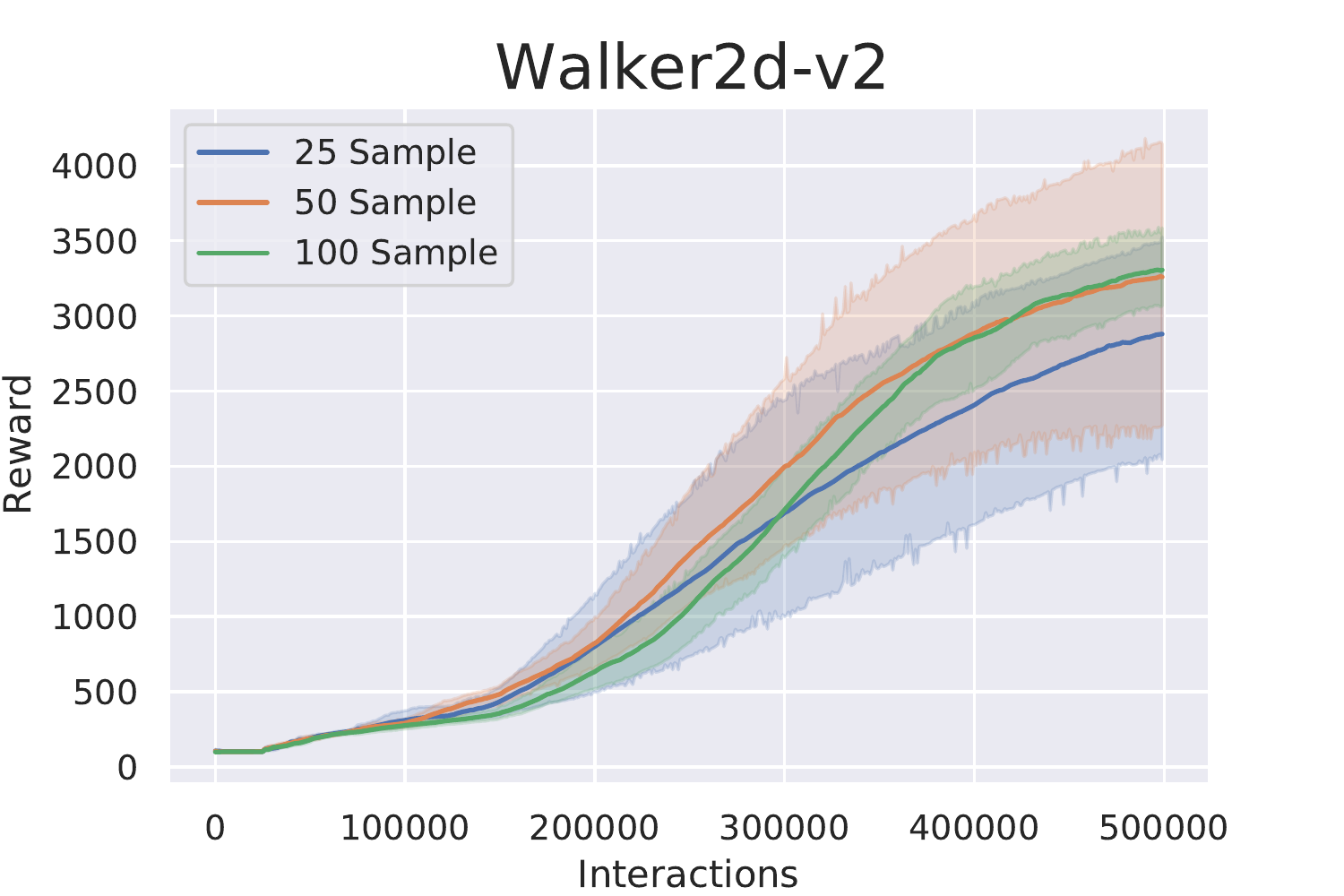}
\end{minipage}}
\subfigure[Perturbation Network]{
\begin{minipage}[b]{0.32\linewidth}
\includegraphics[width=1.0\linewidth]{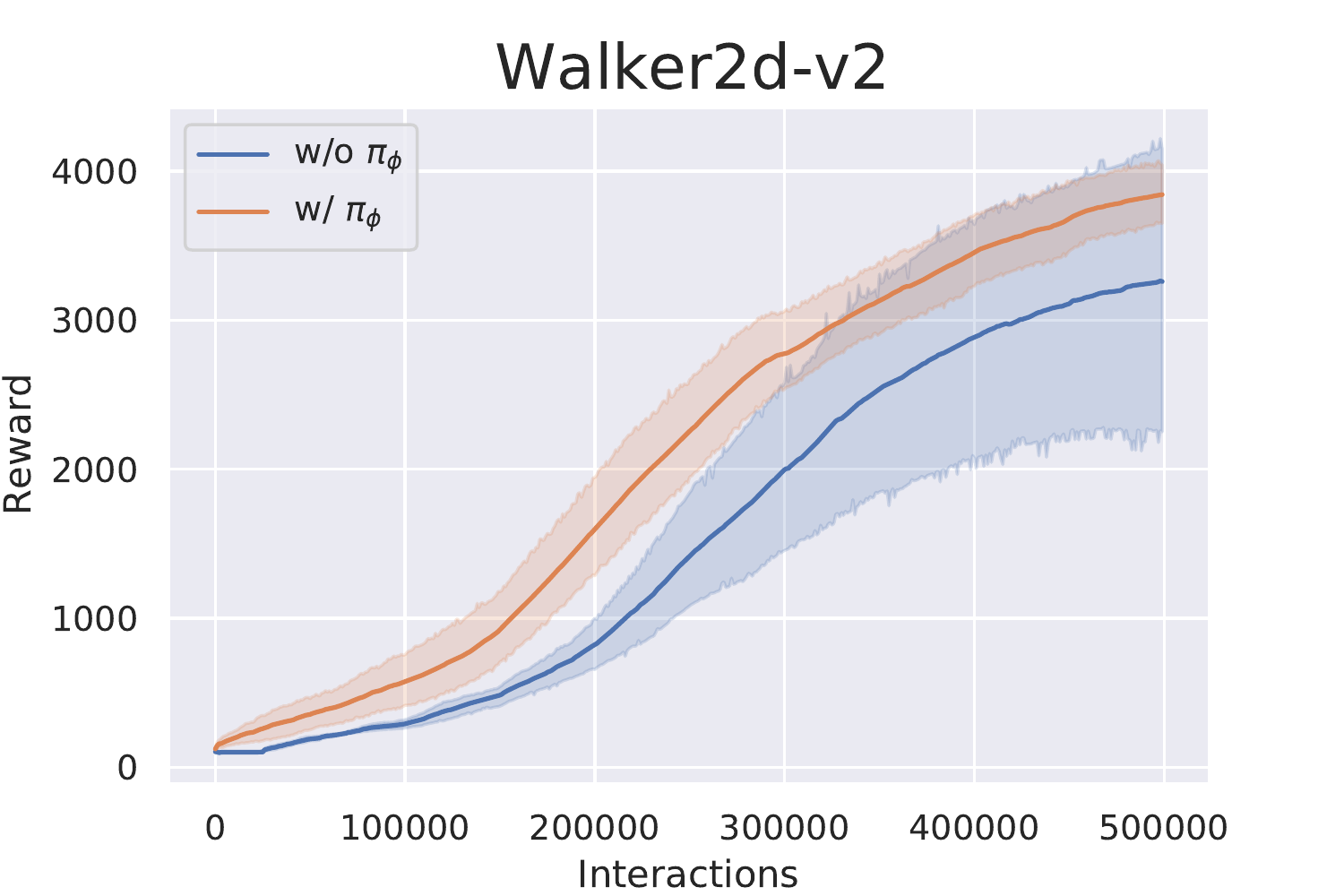}
\end{minipage}}
\subfigure[Number of Gaussians]{
\begin{minipage}[b]{0.32\linewidth}
\includegraphics[width=1.0\linewidth]{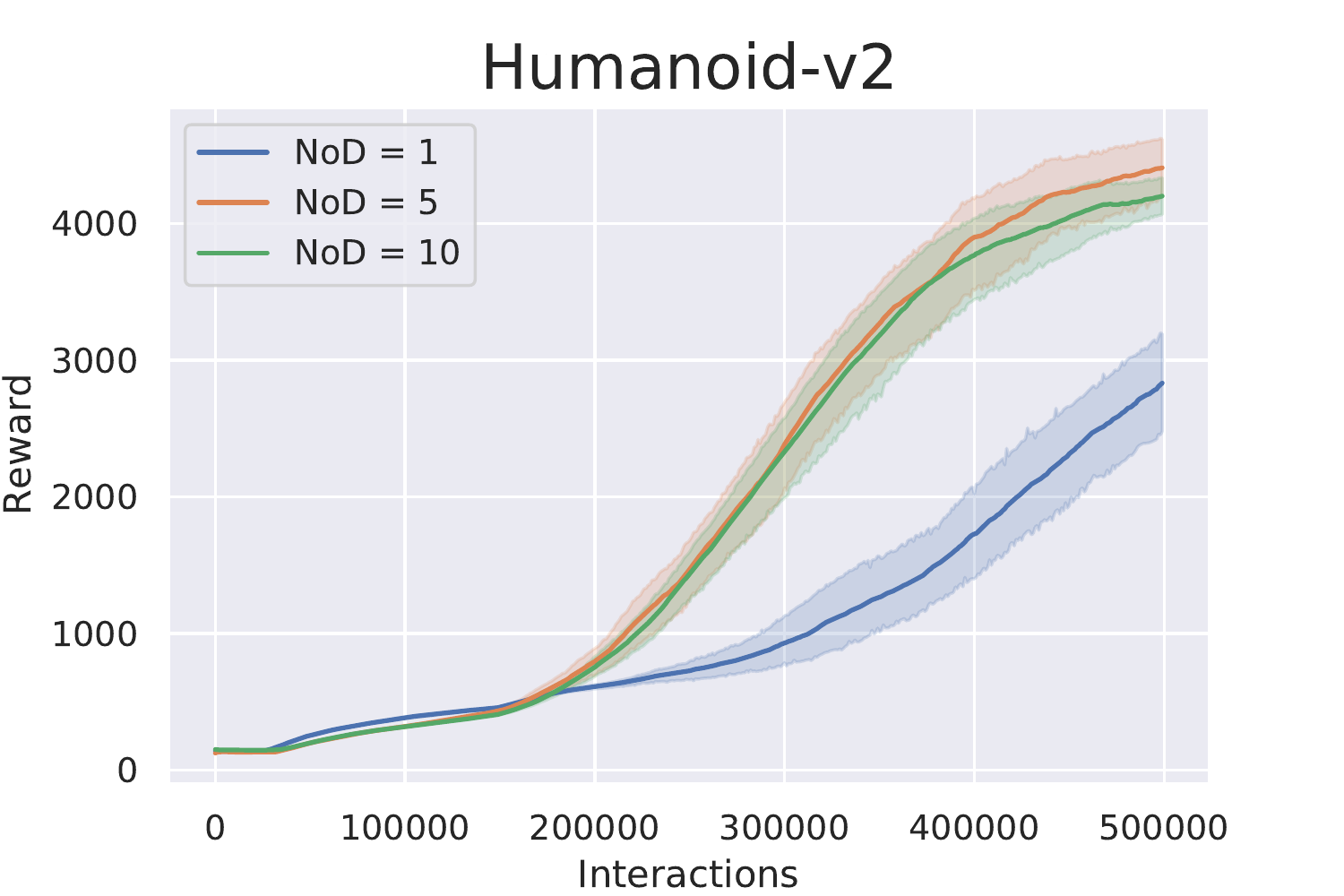}
\end{minipage}}
\caption{Ablation study. The first column (a) shows the performance of our proposed method with different number of samples; the second column (b) shows the performance difference when ZOSPI is work with or without the perturbation network; the last column (c) shows the performance difference between using different number of Diracs when combining MDNs with ZOSPI.}
\label{ablation_results}
\end{figure}

\paragraph{Perturbation Network} The second column of Figure~\ref{ablation_results} shows the ablation studies on the perturbation network. While in the simpler control task of Hopper, the perturbation network helps improve the learning efficiency only a little bit, in more complex task the perturbation network is crucial for efficient learning. This result demonstrate the effectiveness of the combination of global and local information in exploiting of the learned value function. In all of our main experiments, we use a perturbation network with a perturbation range of $0.05$.

\paragraph{The Application of MDNs}
Our ablation studies on the usage of MDNs are shown in the last column of Figure~\ref{ablation_results}. In the easier task of Hopper, using a larger number of NoD in ZOSPI with MDN only improves a little but introduce several times of extra computation expense. However, such multi-modal policies clearly improves the performance in the complex tasks like Humanoid. In our experiments, we find larger NoD benefits the learning for the Walker2d and Humanoid tasks, and do not improve performance in the other three environments. More implementation details of ZOSPI with MDN can be found in Appendix~\ref{appd_zospi_mdn}.

\section{Conclusion}
In this work, we propose the method of Zeroth-Order Supervised Policy Improvement (ZOSPI) as an alternative approach of policy gradient algorithms for continuous control. ZOSPI improves the learning efficiency of previous policy gradient learning by exploiting the learned $Q$ functions globally and locally through sampling the action space.
Different from previous policy gradient methods, the policy optimization of ZOSPI is based on supervised learning so that the learning of actor can be implemented with regression. Such a property enables  the potential extensions such as multi-modal policies, which can be seamlessly cooperated with ZOSPI. We evaluate ZOSPI on five locomotion tasks, where the proposed method remarkably improves the performance in terms of both sample efficiency and asymptotic performance compared to previous policy gradient methods.





\newpage
\bibliography{iclr2021_conference}
\bibliographystyle{ieeetr}%

\newpage
\appendix
\section{Visualization of Q-Landscape}
\label{vis_Q}
Figure~\ref{landscape_pendulum} shows the visualization of learned policies (actions given different states) and $Q$ values in TD3 during training in the Pendulum-v0 environment, where the state space is 3-dim and action space is 1-dim. The red lines indicates the selected action by the current policy. The learned $Q$ function are always \textbf{non-convex} and \textbf{locally convex}. As a consequence, in many states the TD3 is not able to find globally optimal solution and local gradient information may be misleading in finding actions with the highest $Q$ values.

\begin{figure}[h]
	\centering
	\includegraphics[width=1.0\linewidth]{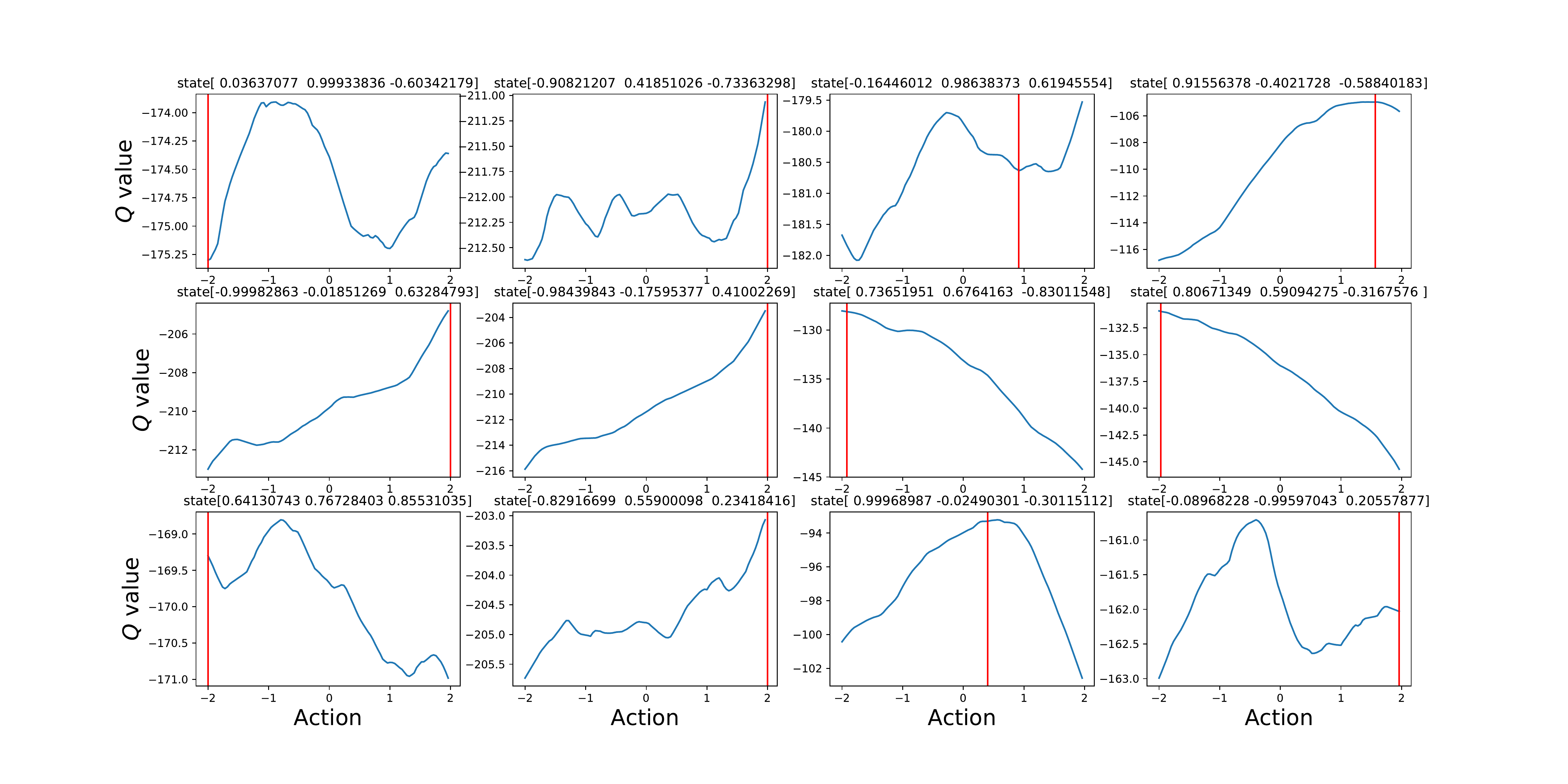}
\caption{Landscape of learned value function in the Pendulum-v0 environment}
\label{landscape_pendulum}
\end{figure}


\section{One-step Zeroth-Order Optimization with Consistent Iteration}
\begin{algorithm}[H]
	\caption{One-step Zeroth-Order Optimization with Consistent Iteration}\label{algo_zoa_2}
	\begin{algorithmic}
		\STATE \textbf{Require}
        \STATE Objective function $Q$, domain $\mathcal{A}$, current point $a_0$, number of local samples $n_1$, number of global samples $n_2$, local scale $\eta > 0$ and step size $h$, number of steps $m$.
    \FOR{$t = 1, \dots n_2$}
        \STATE \textbf{Globally sampling}
		\STATE Sample a point uniformly in the entire space by
		$$
		    a_{t0} \sim \mathcal{U}_{\mathcal{A}}
		$$
		\STATE where $\mathcal{U}_{\mathcal{A}}$ is the uniform distribution over $\mathcal{A}$.
		\FOR{$i = 1, \dots, m$}
    		\STATE \textbf{Locally sampling}
    		\STATE Sample $n_1$ points around $a_{t,i-1}$ by
    		$$
    		    \tilde{a}_{j} = a_{t, i - 1} + \mu e_j \text{ for } e_j \sim \mathcal{N}(0, I_d), j = 1, \dots n_1,
    		$$
    		\STATE where $\mathcal{N}(0, I_d)$ is the standard normal distribution centered at 0.
    		\STATE \textbf{Update}
    		\STATE Set $a_{t, i} = a_{t, i - 1} + h(\argmax_{a \in \{\tilde{a}_j\}} Q(a) - a_{t, i - 1})$
		\ENDFOR
	\ENDFOR
	\RETURN{\ $\max_{a \in \{a_tm\}_{t = 1}^{n_2}} Q(a)$}.
	\end{algorithmic}
\end{algorithm}
\label{app:thm}

\section{Missing Proofs in Section \ref{sec:thm}}
\label{app:proof}
\paragraph{Proof of Lemma \ref{lem:GE}.} By the properties of Euclidean metric, we have the following lemma.
\begin{lemma}
Let $\mathcal{A} \subset \mathbb{R}^d$ with Euclidean metric. Any $a \in \mathcal{A}$, $\|x\|_2 \leq k$. Then there exists a set $\{a_1, \dots, a_N\}$ that is a $\frac{2k\sqrt{d}}{N^{1/d}}$-covering of $N$. In other words, for all $a \in \mathcal{A}$, $\exists i \in [N]$, 
$$
    \|a - a_i\|_2 \leq \frac{2k\sqrt{d}}{N^{1/d}}.
$$
\end{lemma}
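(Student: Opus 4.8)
The plan is to realize the $N$ covering centers as the centers of a uniform grid of sub-cubes tiling an axis-aligned cube that contains $\mathcal{A}$. First I would observe that the Euclidean constraint $\|a\|_2 \leq k$ forces each coordinate to satisfy $|a_i| \leq \|a\|_2 \leq k$, so $\mathcal{A} \subset [-k, k]^d$, a cube of side length $2k$. This reduces the problem to covering a cube, where a regular grid of centers is the natural construction and Euclidean distances are easy to control coordinatewise.

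Next I would set $m = \lfloor N^{1/d} \rfloor$ and partition $[-k,k]^d$ into $m^d$ congruent closed sub-cubes, each of side length $2k/m$, taking $\{a_1, \dots, a_{m^d}\}$ to be their centers. Since $m^d \leq N$, this uses at most $N$ points (one may pad with arbitrary additional points to reach exactly $N$ if desired, which only helps). For any $a \in \mathcal{A}$, the point $a$ lies in some sub-cube with center $a_i$, and each coordinate of $a$ differs from that of $a_i$ by at most half the side length, i.e. $|a_j - (a_i)_j| \leq k/m$. Summing over coordinates gives $\|a - a_i\|_2 \leq \sqrt{d \,(k/m)^2} = k\sqrt{d}/m$, which is just the half-diagonal of a sub-cube.

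Finally, I would convert the bound $k\sqrt{d}/m$ into the stated form. Because $N \geq 1$ implies $N^{1/d} \geq 1$, the elementary inequality $\lfloor x \rfloor \geq x/2$ for $x \geq 1$ yields $m = \lfloor N^{1/d}\rfloor \geq N^{1/d}/2$, and hence $k\sqrt{d}/m \leq 2k\sqrt{d}/N^{1/d}$, which is exactly the claimed covering radius. The main obstacle here is really the only subtlety: when $N$ is not a perfect $d$-th power one cannot take $m = N^{1/d}$ exactly, and the factor of $2$ appearing in the statement is precisely the slack needed to absorb the floor, so I expect the integer-rounding step to be where the $2$ is earned and where one must check the edge case $1 \leq N^{1/d} < 2$ (handled directly since $\lfloor N^{1/d}\rfloor = 1 \geq N^{1/d}/2$ there).
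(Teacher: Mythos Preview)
Your argument is correct: the grid construction on $[-k,k]^d$ with $m=\lfloor N^{1/d}\rfloor$ sub-cubes per axis, the half-diagonal bound $k\sqrt{d}/m$, and the floor inequality $\lfloor x\rfloor\geq x/2$ for $x\geq 1$ together yield exactly the stated radius. The paper does not actually supply a proof of this lemma; it merely asserts it as a consequence ``by the properties of Euclidean metric'' and uses it as an input to the proof of Lemma~\ref{lem:GE}. Your grid-covering argument is the standard way to justify such a bound and is almost certainly what the authors had in mind, so there is nothing to compare beyond noting that you have filled in a step the paper left implicit.
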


Define $\mathcal{A}_i$ as the set of all the actions that is closest to $a_i$:
$$
    \mathcal{A}_i \overset{\Delta}{=} \left\{a \in \mathcal{A}: i = \min\{{\argmin_{j \in [N]} \|a_j - a\|_2}\}\right\}.
$$

 Let $P(\cdot)$ be the probability measure of uniform distribution over $\mathcal{A}$ We have $P(\mathcal{A}_i) \geq 1/(2N)$. Now since we have $n$ uniform samples from set $\mathcal{A}$. Let $N = n/\log^2(n/\delta)$. 
 
 \begin{lemma} [Coupon Collector's problem]
 \label{lem:CCP}
It takes $O(N \log^2(N/\delta))$ rounds of random sampling to see all $N$ distinct options with a probability at least $1-\delta$.
\end{lemma}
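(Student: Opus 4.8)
The plan is to prove Lemma~\ref{lem:CCP} by the classical high-probability analysis of the coupon collector: a union bound over the $N$ options together with the elementary inequality $1-x \le e^{-x}$. The only twist relative to the textbook version is that in our application a ``coupon'' is a covering cell $\mathcal{A}_i$ whose probability under the uniform sampling is only guaranteed to be $P(\mathcal{A}_i)\ge 1/(2N)$ rather than exactly $1/N$, so I would phrase the argument uniformly in the draw probabilities; this costs only a factor of $2$ and changes nothing essential.

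Concretely, I would set it up as follows. Let the rounds be i.i.d., and in each round let option $j\in[N]$ be observed with probability $p_j$, where $p_j\ge 1/(2N)$ for every $j$ (taking $p_j=1/N$ recovers the plain statement, and the $1/(2N)$ version is exactly what is needed when feeding this into the proof of Lemma~\ref{lem:GE}). Fix a number of rounds $m$. For a fixed $j$, independence of the rounds gives that option $j$ is never observed with probability $(1-p_j)^m\le e^{-p_j m}\le e^{-m/(2N)}$. A union bound over $j=1,\dots,N$ then yields
\[
  \Pr[\text{some option is never observed in } m \text{ rounds}] \;\le\; N\,e^{-m/(2N)}.
\]
Finally I would choose $m$ so that the right-hand side is at most $\delta$: the inequality $N e^{-m/(2N)}\le\delta$ is equivalent to $m\ge 2N\log(N/\delta)$. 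Hence $m=O\!\left(N\log(N/\delta)\right)$ rounds already suffice, which is in particular $O\!\left(N\log^2(N/\delta)\right)$, and after that many rounds all $N$ options have been seen with probability at least $1-\delta$, as claimed.

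I do not anticipate a genuine obstacle: this is a one-line union bound, and the target bound is deliberately loose, since a single $\log$ factor suffices where the lemma only asks for $\log^2$. The two points I would be careful to state explicitly are that independence of the rounds is what licenses the product form $(1-p_j)^m$, and that the estimate must hold uniformly in the $p_j$ (hence the $p_j\ge 1/(2N)$ formulation, reflecting the non-uniform cell masses). If a sharper constant were ever wanted one could Poissonize the sampling process or run a second-moment argument, but that refinement is unnecessary here.
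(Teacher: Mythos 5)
Your proof is correct, and it takes a genuinely different (and in fact sharper) route than the paper's. The paper proves the lemma via the classical stage-wise decomposition of the coupon collector: it bounds the waiting time for each new coupon — a geometric random variable with success probability $\tfrac{N-i}{N}$ after $i$ coupons have been seen — using the tail bound $\log(1/\delta)/\log\bigl(1+\tfrac{p}{1-p}\bigr)=O\bigl(\log(1/\delta)\tfrac{1-p}{p}\bigr)$, allocates failure probability $\delta/N$ to each of the $N$ stages, and sums the resulting harmonic series to get $O\bigl(N\log^2(N/\delta)\bigr)$. You instead fix the number of rounds $m$, apply $(1-p_j)^m\le e^{-p_j m}$ to each option, and union bound over the $N$ options, obtaining that $m=O\bigl(N\log(N/\delta)\bigr)$ already suffices — a single log factor where the lemma only claims two, so the stated bound follows a fortiori. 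Your formulation with heterogeneous probabilities $p_j\ge 1/(2N)$ is also a better fit for how the lemma is actually used in the proof of Lemma~\ref{lem:GE}, where the ``coupons'' are covering cells $\mathcal{A}_i$ whose masses are only bounded below by $1/(2N)$ rather than exactly $1/N$; the paper's stage-wise argument is phrased for the uniform case and would need the same modification to apply verbatim. In short: same conclusion, different decomposition — the paper's buys the familiar stage-by-stage picture, yours buys a tighter bound and direct compatibility with the non-uniform cell probabilities.
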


{\it Proof.} Consider a general sampling problem: for any finite set $\mathcal{N}$ with $|\mathcal{N}| = N$. For any $n$, whose sampling probability is $p(c)$, with a probability at least $1-\delta$, it requires at most 
$$
    \frac{\log(1/\delta)}{\log(1 + \frac{p(n)}{1-p(n)})} \text{ for $n$ to be sampled}.
$$

Since  $\log(1+x) \geq x - \frac{1}{2}x^2$ for all $x > 0$, we have 
$$
    \frac{\log(1/\delta)}{\log(1 + \frac{p(n)}{1-p(n)})} \leq \log(1/\delta) \frac{1}{\frac{p(n)}{1-p(n)} - \frac{p(n)^2}{2(1-p(n))^2}} = O(\log(1/\delta) \frac{1-p(n)}{p(n)}).
$$
Searching the whole space $\mathcal{N}$ with each new element being found with probability $\frac{N - i}{N}$ at round $i$, it requires at most 
$$
    O(\sum_{i = 1}^N \log(\frac{N}{\delta}) \frac{N}{N - i}) = O( \log^2(\frac{N}{\delta})N),
$$
with a probability at most $1-\delta$.

By Lemma \ref{lem:CCP} We have with a probability at least $1-\delta$, there exists a sample in each $\mathcal{A}_i$ described above. To proceed, we apply the Lipschitz of the $Q$ function, Lemma \ref{lem:GE}
 follows. 
 
\paragraph{Proof of Theorem \ref{thm:1}.} Now we proceed to show Theorem \ref{thm:1}. We denote $\frac{2k\sqrt{d}L\log^{1/d}(n/\delta)}{n^{1/d}}$ by $\sigma^2$. Our global policy network gives a prediction from a linear model. Let our dataset be $\{s_t, a_t^+\}^T_{t = 1}$. Let $\epsilon_t = a_t^+ - a_t^*$. Let $\vect{S} = (s_1, \dots, s_T)^T$ and $\vect{a}^+, \vect{a}^*, \vect{\epsilon}$ be the corresponding vector for $(a_t^+)_{t = 1}^T, (a_t^*)_{t = 1}^T, (\epsilon_t)_{t = 1}^T$. We have any $\epsilon_i, \epsilon_j$ are independent for $i \neq j$. We further make an assumption that $\E[\epsilon_t] = 0$. Then we immediately have $\E [\epsilon_i \epsilon_j] = 0$ as well. This assumption is just for simplifying the proof, we can show similar results without assume the unbiasedness as the bias can be bounded. 

To proceed, let 
$
    \theta^* = \argmin_{\theta} \E_s\| s^T\theta -  a^*(s)\|_2^2.
$

Since the ERM solution is simply OLS (ordinary least square). We have the estimate
$$
    \hat \theta = (\vect{S}^T\vect{S})^{-1}\vect{S}^T \vect{a}^+.
$$
We have 
$$
    \hat \theta - \theta^* = (\vect{S}^T\vect{S})^{-1}\vect{S}^T(\vect{a}^* - \vect{S}\theta^*) + (\vect{S}^T\vect{S})^{-1}\vect{S}^T\vect{\epsilon}.
$$
Since $\|\epsilon_t\|^2 \leq \sigma^2$, we have $\operatorname{Var}(\epsilon_t) \leq \sigma^2$. We observe that 
$$
    \operatorname{Var}((\vect{S}^T\vect{S})^{-1}\vect{S}^T\vect{\epsilon}) \leq \sigma^2 ((\vect{S}^T\vect{S})^{-1}) = \mathcal{O}(\sigma^2 p / T).
$$
The generalization error is given by
\begin{align*}
    &\quad \E_s\|s^T\hat\theta - a^*(s)\|_2^2\\
    &\leq \E_s\|s^T\hat\theta - s^T \theta^*\|_2^2 + \E_s\| s^T\theta^* -  a^*(s)\|_2^2 \\
    &= \mathcal{O}\left( \frac{\sigma^2p}{T} +  \E_s\|s^T(\vect{S}^T\vect{S})^{-1}\vect{S}^T(\vect{a}^* - \vect{S}\theta^*)\|_2^2 + \E_s\| s^T\theta^* -  a^*(s)\|_2^2 \right) \\
    &= \tilde{\mathcal{O}}\left(\frac{\sigma^2p}{T} + \E_s\| s^T\theta^* -  a^*(s)\|_2^2\right).
\end{align*}

\section{Implementation Details}
\label{appd_zospi_mdn}
\subsection{Network Structure And Hyper-Params}
In our experiments, we follow ~\cite{fujimoto2018addressing} to use a $3$-layer MLP with $256$ hidden units for both critic and actor networks. We also follow ~\cite{fujimoto2018addressing} to use $25000$ timesteps for worm-up and use a batch-size of $256:1$ training-interaction proportion during training.

\subsection{Mixture Density Networks}

Our implementation of ZOSPI with MDN is based on neural network with multiple outputs. For MDN with $K$ Gaussian mixture outputs, the neural network has $3\times K$-dim output. The first $K$-dim units are normalized with softmax activation as the probability of selecting the Gaussians, the following $K$-dim units are corresponding $K$ mean values of the Gaussians, and the last $K$-dim units are standard deviation of the $K$ Gaussians. In our experiments, we use Diracs instead of Gaussians for parameterization. Therefore, our networks output $2\times K$-dim units for each action dimension, where the first $K$-dimensions denote the probabilities and the latter $K$-dimensions denote the mean values. In our experiments, we use $K=1$ for Hopper, HalfCheetah and Ant, $K=5$ for Walker2d, and $K=5$ for Humanoid. ($K=5$ and $K=10$ achieve on-par performance for the Humanoid environment, though the $K=10$ setting spend roughly one more time computational expense).

More implementation details are provided with the code in the supplementary material.

\subsection{Running Time of ZOSPI}
We conduct our experiments with 8 GTX TITAN X GPUs and 32 Intel(R) Xeon(R) E5-2640 v3 @ 2.60GHz CPUs. The wall clock time of our proposed method is roughly 3-times slower than running TD3, without application of MDNs (i.e., $\text{NoD} = 1$). It takes roughly $20$ hours to train Hopper with $10$ seeds, and takes about $120$ hours to train Humanoid when $\text{NoD}=10$ with $10$ seeds.

\section{Better Exploration with Bootstrapped Networks}

Sample efficient RL requires algorithms to balance exploration and exploitation. One of the most popular way to achieve this is called optimism in face of uncertainty (OFU)~\cite{brafman2002r,jaksch2010near,azar2017minimax,jin2018q}, which gives an upper bound on $Q$ estimates and applies the optimal action corresponding to the upper bound. The optimal action $a_t$ is given by the following optimization problem:
\begin{equation}
    \argmax_a  Q^+(s_t, a), \label{equ:argmaxQ}
\end{equation}
where $Q^+$ is the upper confidence bound on the optimal $Q$ function. 
A guaranteed exploration performance requires both a good solution for (\ref{equ:argmaxQ}) and a valid upper confidence bound. 

While it is trivial to solve (\ref{equ:argmaxQ}) in the tabular setting, the problem can be intractable in a continuous action space. Therefore, as shown in the previous section, ZOSPI adopts a local set to approximate policy gradient descent methods in the local region and further applies a global sampling scheme to increase the potential chance of finding a better maxima.

As for the requirement of a valid upper confidence bound, we use bootstrapped $Q$ networks to address the uncertainty of $Q$ estimates as in~\cite{osband2016deep,osband2018randomized,agarwal2019striving,kumar2019stabilizing,ciosek2019better}. Specifically, we keep $K$ estimates of $Q$, namely $Q_1, \dots Q_K$ with bootstrapped samples from the replay buffer. Let $\overline{Q} = \frac{1}{K} \sum_{k} Q_k(s, a)$. An upper bound $Q^+$ is 
\begin{equation}
     Q^+(s, a) = \overline{Q} + \phi \sqrt{\frac{1}{K} \sum_k [Q_k(s, a) - \overline{Q}]^2}, \label{equ:Q+}
\end{equation}
where $\phi$ is the hyper-parameter controlling the failure rate of the upper bound. Another issue is on the update of bootstrapped $Q$ networks. Previous methods~\cite{agarwal2019striving} usually update each $Q$ network with the following target
$r_{t}+\gamma Q_k\left(s_{t+1},\pi_{\theta_t}(s_{t+1})\right),$ which violates the Bellman equation as $\pi_{\theta_t}$ is designed to be the optimal policy for $Q^+$ rather than $Q_k$. Using $\pi_{\theta_t}$ also introduces extra dependencies among the $K$ estimates. We instead 
employ a global random sampling method to correct the violation as
$$
    r_{t}+\gamma \max_{i = 1, \dots n} Q_k\left(s_{t+1}, a_i\right), \quad a_1, \dots a_n \sim \mathcal{U}_{\mathcal{A}}.
$$
The correction also reinforces the argument that a global random sampling method yields a good approximation to the solution of the optimization problem (\ref{equ:argmaxQ}). The detailed algorithm is provided in Algorithm \ref{algo_SPI_UCB} in Appendix~\ref{append:algo}.

\subsection{Algorithm \ref{algo_SPI_UCB}: ZOSPI with Bootstrapped $Q$ networks}
\label{append:algo}
\begin{algorithm}[ht]
\small
	\caption{ZOSPI with UCB Exploration}\label{algo_SPI_UCB}
	\begin{algorithmic}
		\STATE \textbf{Require}
		\begin{itemize}
		    \item The number of epochs $M$, the size of mini-batch $N$, momentum $\tau > 0$ and the number of Bootstrapped Q-networks $K$.
			\item Random initialized policy network $\pi_{\theta_1}$, target policy network $\pi_{\theta'_1}$, $\theta'_1\leftarrow  \theta_1$.
		    \item $K$ random initialized $Q$ networks, and corresponding target networks, parameterized by $w_{k, 1},w'_{k, 1}$, $w_{k, 1}'\leftarrow w_{k, 1}$ for $k = 1, \dots, K$.
		\end{itemize}
		\FOR{iteration $= 1,2,...$}
		\FOR{t $= 1,2,..., T$}
		\STATE $\#$ Interaction
		\STATE Run policy $\pi_{\theta'_t}$, and collect transition tuples $(s_t,a_t,s'_t,r_t, m_t)$.
		\FOR{epoch $j = 1,2,..., M$}
		\STATE Sample a mini-batch of transition tuples $\mathcal{D}_j = \{(s,a,s',r,m)_{i}\}_{i = 1}^N$.
		\STATE $\#$ Update $Q$
	    \FOR{$k=1,2,...,K$}
	    \STATE Calculate the $k$-th target $Q$ value $y_{ki} = r_i + \max_l Q_{w'_{k,t}}(s'_i,a'_l)$, where $a'_l \sim \mathcal{U}_\mathcal{A}$.
	    \STATE Update $w_{k, t}$ with loss $\sum_{i = 1}^N m_{ik}(y_{ki} - Q_{w'_{k, t}}(s_i,a_i))^2$.
	    \ENDFOR
		\STATE $\#$ Update $\pi$
	    \STATE Calculate the predicted action $a_0 = \pi_{\theta'_t}(s_i)$
	    \STATE Sample actions $a_l\sim \mathcal{U}_\mathcal{A}$
	    \STATE Select $a^+ \in \{a_l\} \cup \{a_0\}$ as the action with maximal $Q^+(s_t, a)$ defined in (\ref{equ:Q+}).
	    \STATE Update policy network with Eq.(\ref{equ:0th}).
		\ENDFOR
		\STATE $\theta'_{t+1}\leftarrow \tau \theta_{t} + (1-\tau) \theta'_{t}$.
		\STATE $w_{k, t+1}'\leftarrow \tau w_{k, t} + (1-\tau) w_{k,t}'$.
		\STATE $w_{k, t+1} \leftarrow w_{k, t}$; $\theta_{t+1} \leftarrow \theta_{t}$.
		\ENDFOR
		\ENDFOR
		
	\end{algorithmic}
\end{algorithm}

\section{Gaussian Processes for Continuous Control}
Different from previous policy gradient methods, the self-supervised learning paradigm of ZOSPI permits it to learn both its actor and critic with a regression formulation. Such a property enables the learning of actor in ZOSPI to be implemented with either parametric models like neural networks or non-parametric models like Gaussian Processes (GP). Although plenty of previous works have discussed the application of GP in RL by virtue of its natural uncertainty capture ability, most of these works are limited to model-based methods or discrete action spaces for value estimation~\cite{kuss2004gaussian,engel2005reinforcement,kuss2006gaussian,levine2011nonlinear,grande2014sample,fan2018efficient}. On the other hand, ZOSPI formulates the policy optimization in continuous control tasks as a regression objective, therefore empowers the usage of GP policy in continuous control tasks. 

As a first attempt of applying GP policies in continuous control tasks, we simply alter the actor network with a GP to interact with the environment and collect data, while the value approximator is still parameterized by a neural network. We leave the investigation of better consolidation design in future work.

\section{Experiments on the Four-Solution-Maze.}

\begin{figure*}[t]
\centering
\subfigure[Env. and the optimal policy]{
\begin{minipage}[b]{0.149\linewidth}
\label{four_way_maze}
\includegraphics[width=0.98\linewidth]{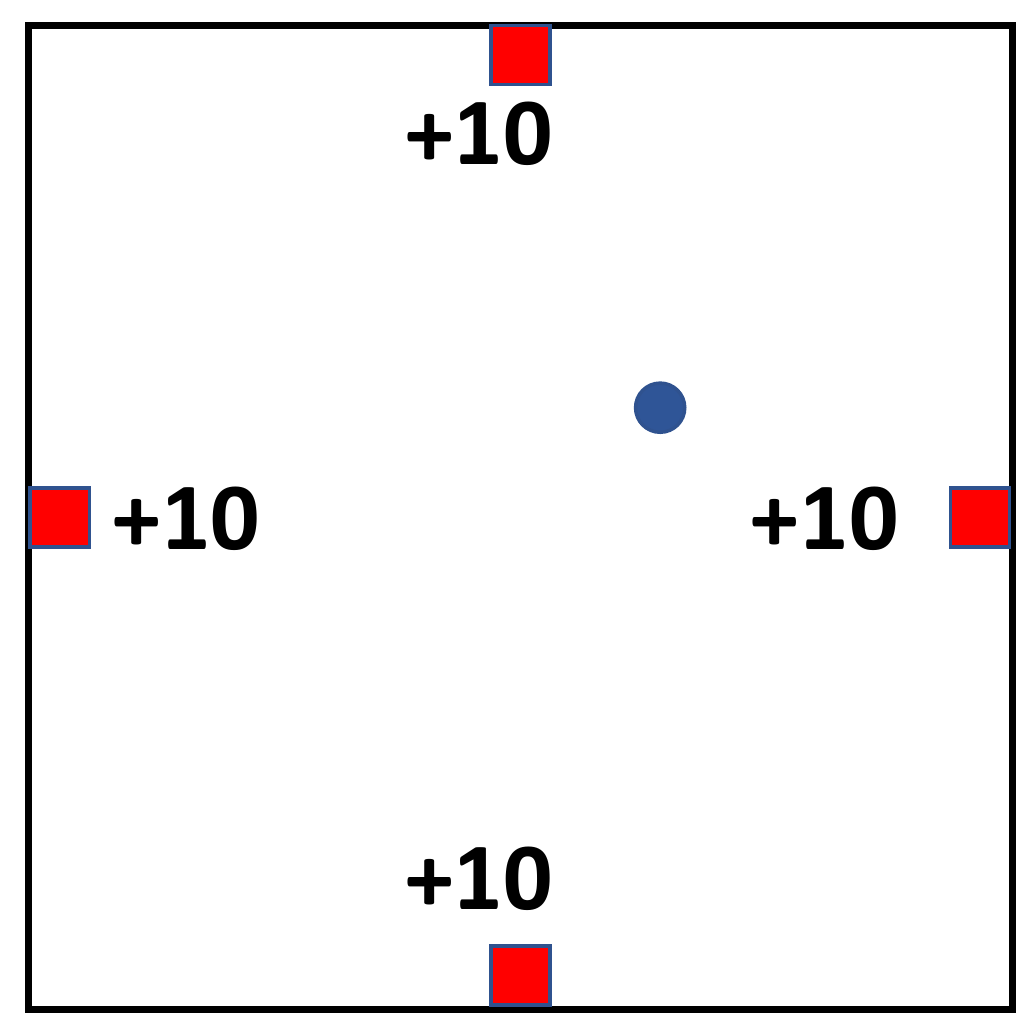}
\end{minipage}
\begin{minipage}[b]{0.151\linewidth}
\includegraphics[width=0.98\linewidth]{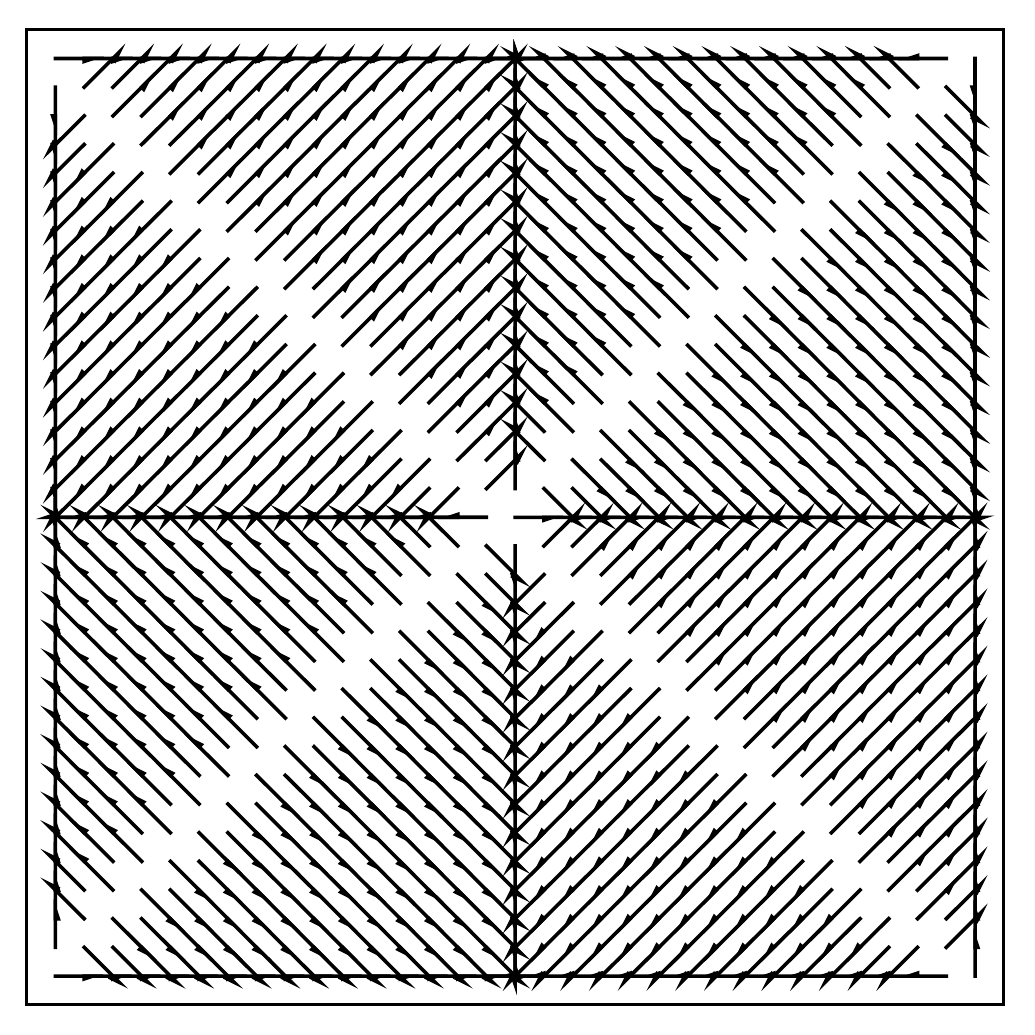}
\end{minipage}}
\subfigure[Performance comparison]{
\begin{minipage}[b]{0.32\linewidth}
\label{toy_curve}
\includegraphics[width=0.98\linewidth]{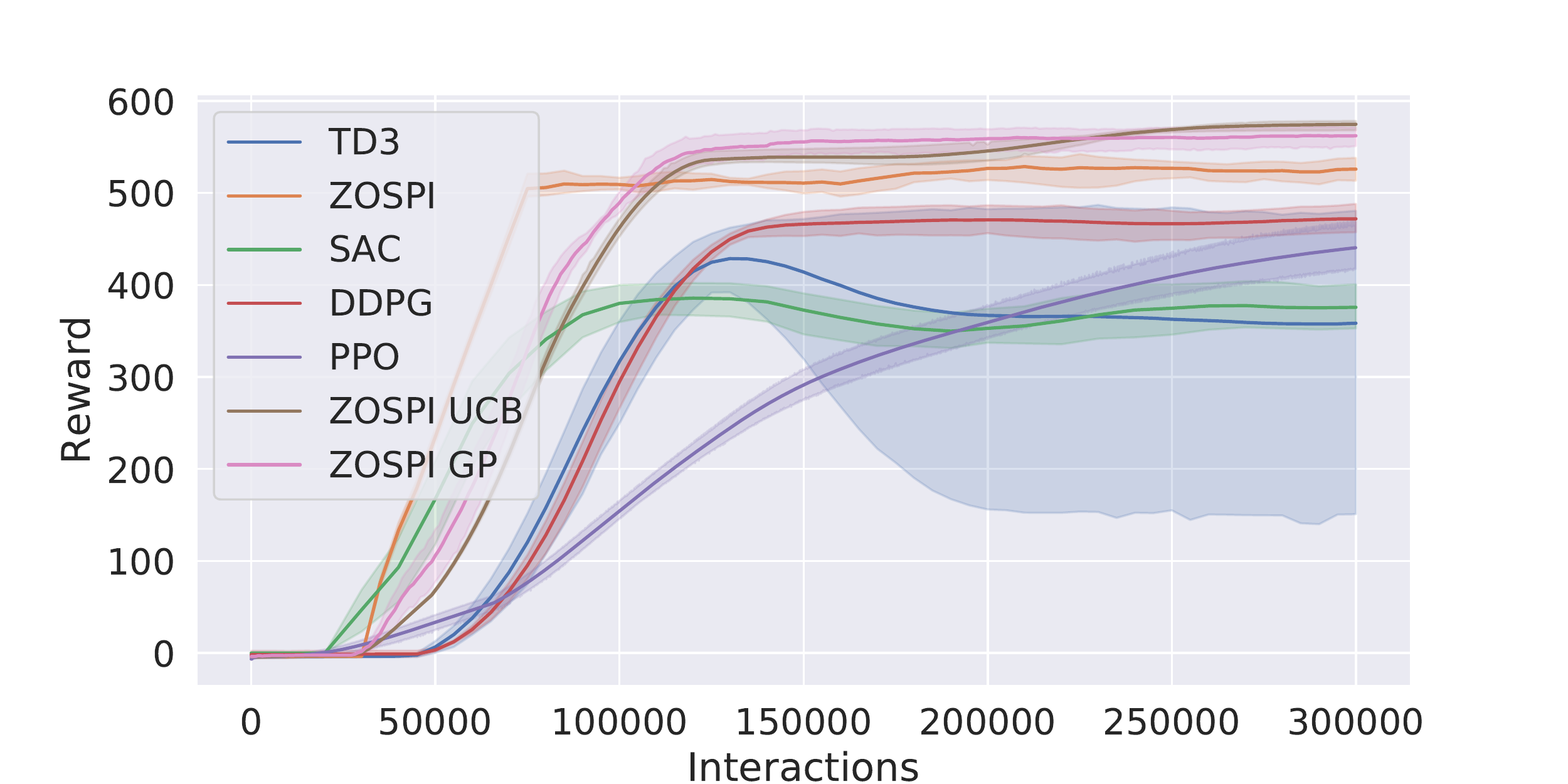}
\end{minipage}}
\subfigure[PPO]{
\begin{minipage}[b]{0.32\linewidth}
\label{vis_toy_start}
\includegraphics[width=0.98\linewidth]{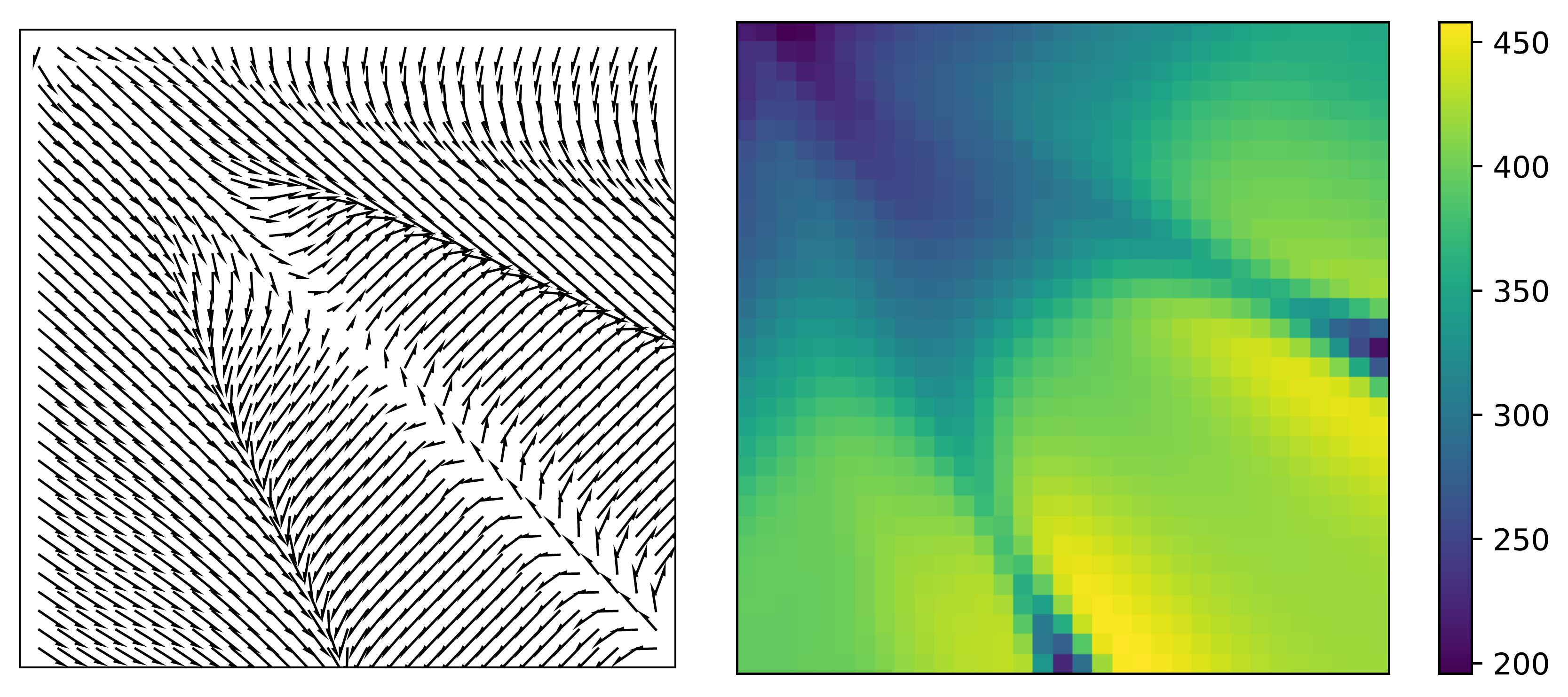}
\end{minipage}}\\
\subfigure[DDPG]{
\begin{minipage}[b]{0.32\linewidth}
\includegraphics[width=0.98\linewidth]{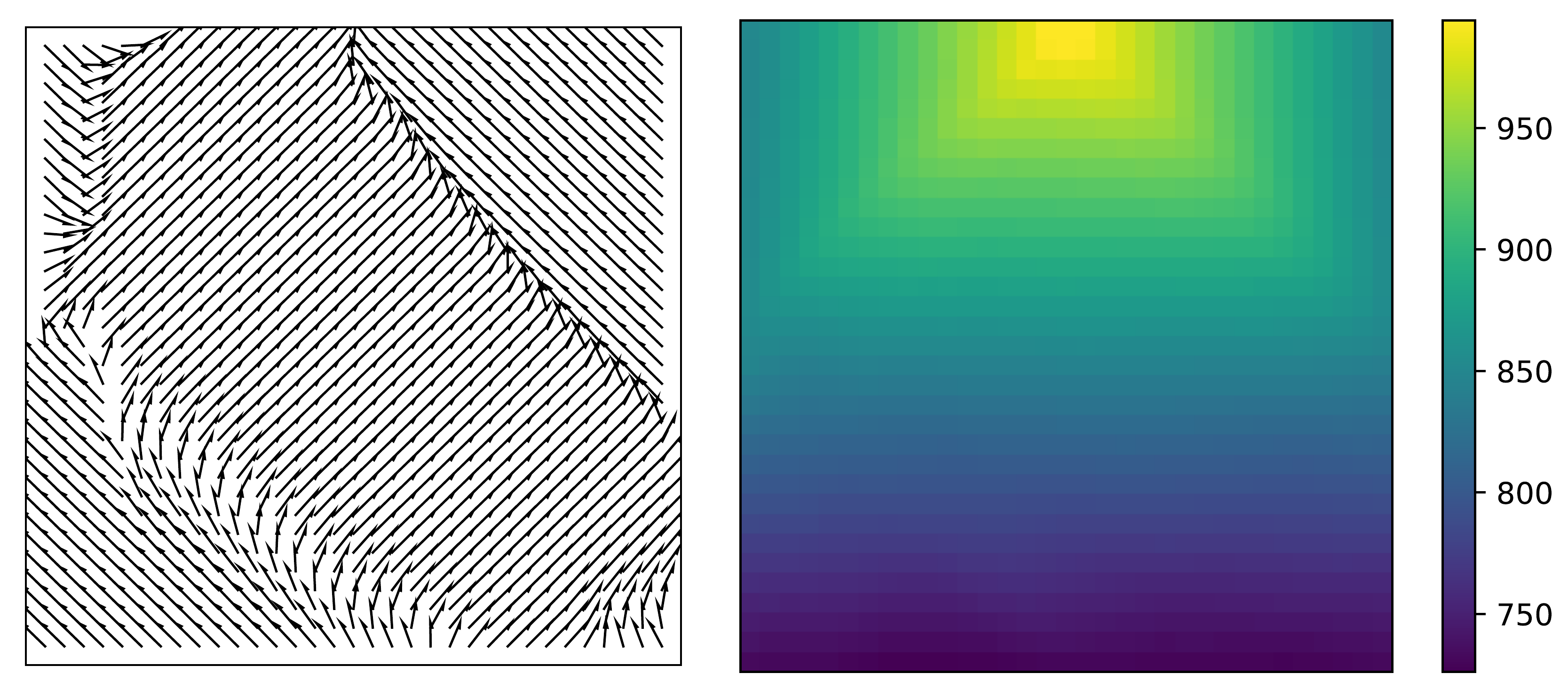}
\end{minipage}}
\subfigure[TD3]{
\begin{minipage}[b]{0.32\linewidth}
\includegraphics[width=0.98\linewidth]{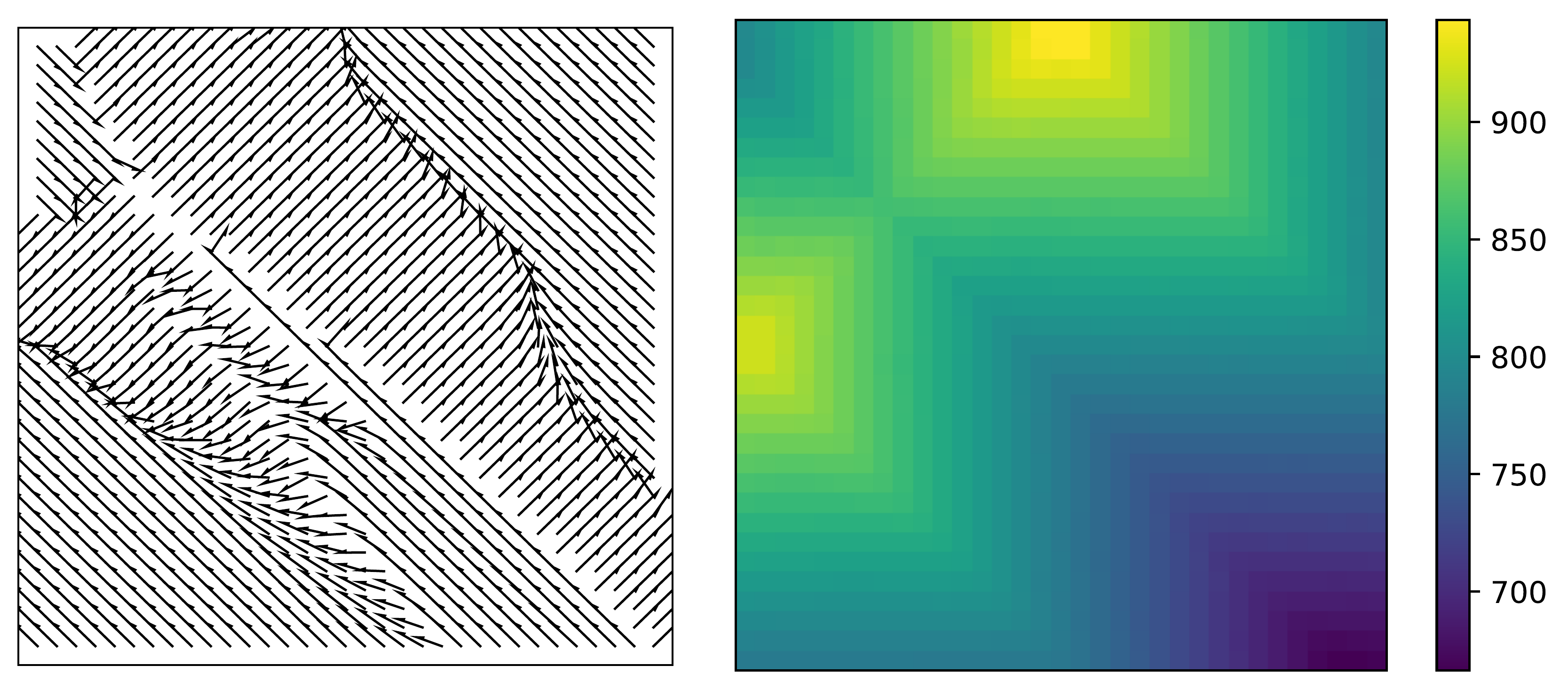}
\end{minipage}}
\subfigure[SAC]{
\begin{minipage}[b]{0.32\linewidth}
\includegraphics[width=0.98\linewidth]{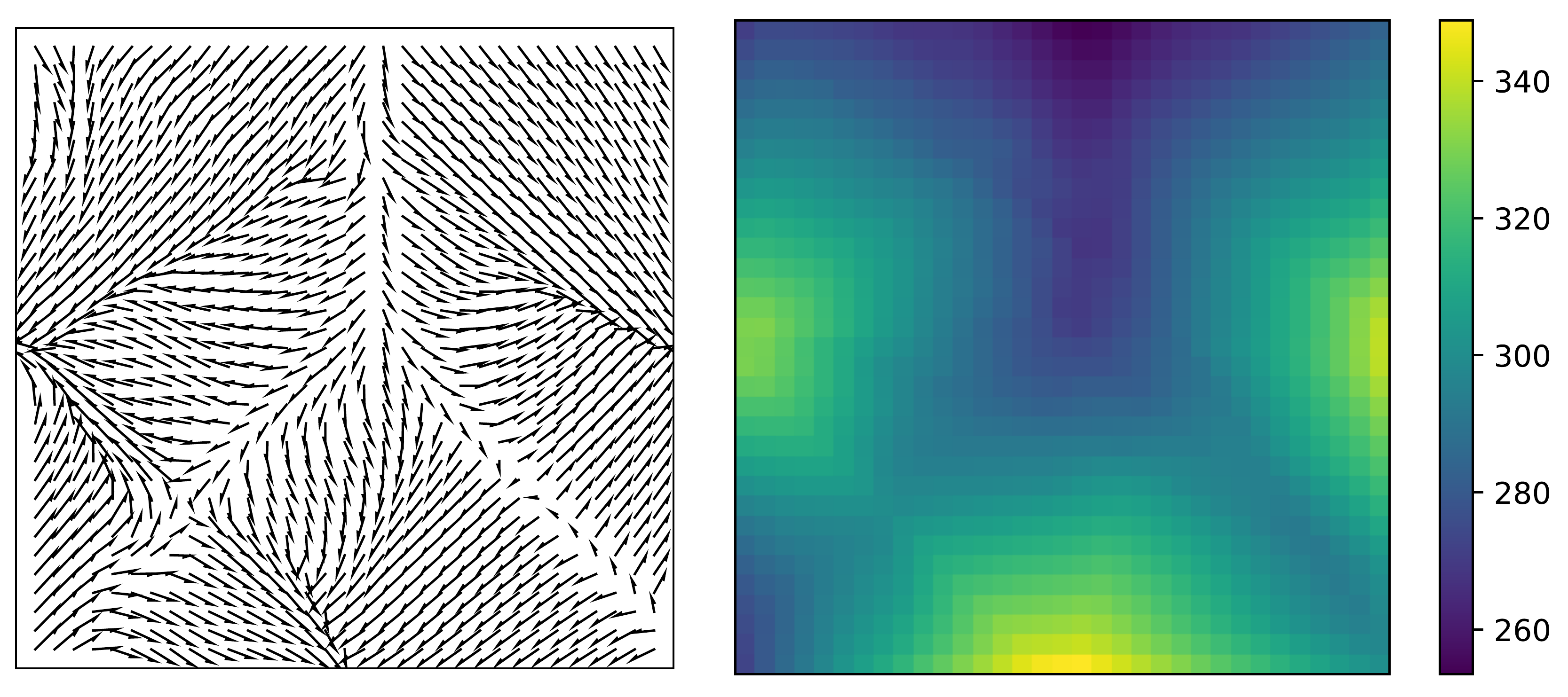}
\end{minipage}}\\
\subfigure[ZOSPI]{
\begin{minipage}[b]{0.32\linewidth}
\includegraphics[width=0.98\linewidth]{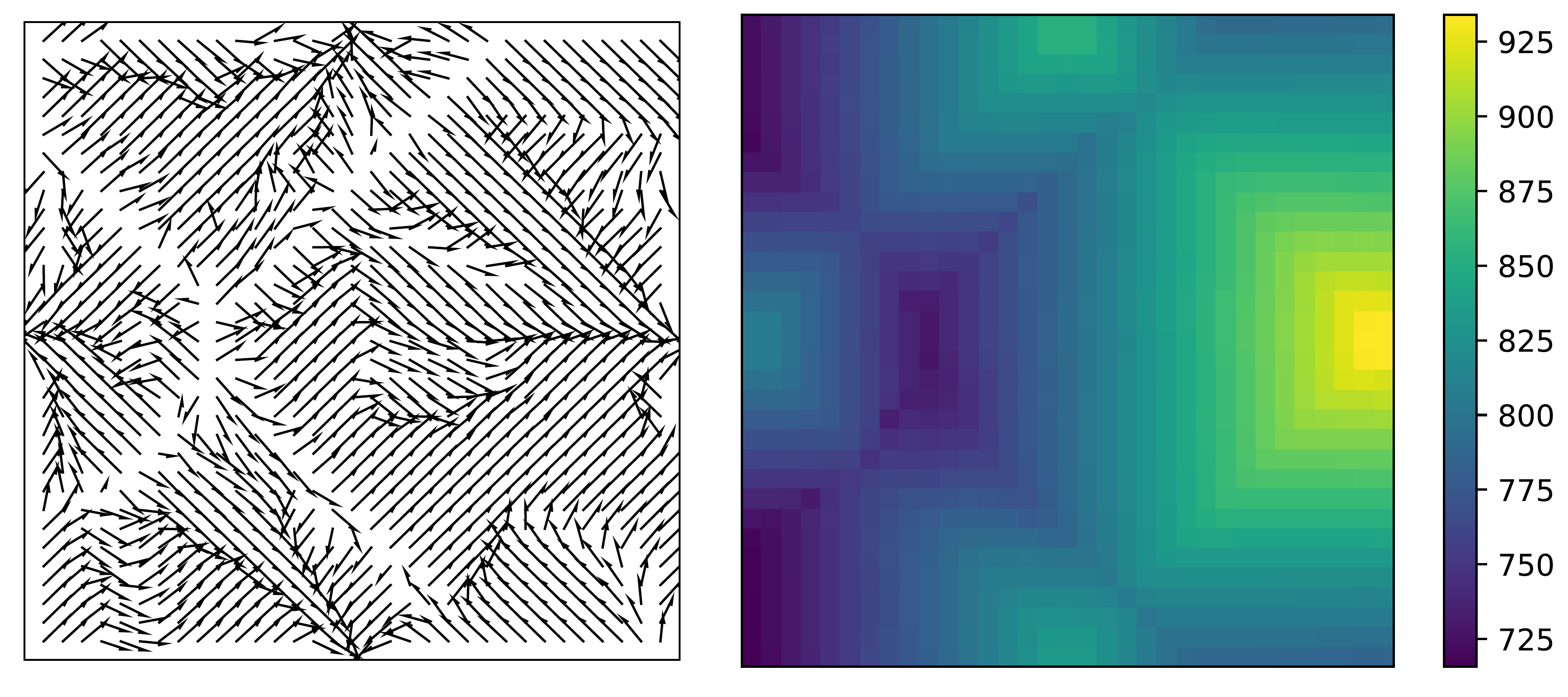}
\end{minipage}}
\subfigure[ZOSPI-UCB]{
\begin{minipage}[b]{0.32\linewidth}
\includegraphics[width=0.98\linewidth]{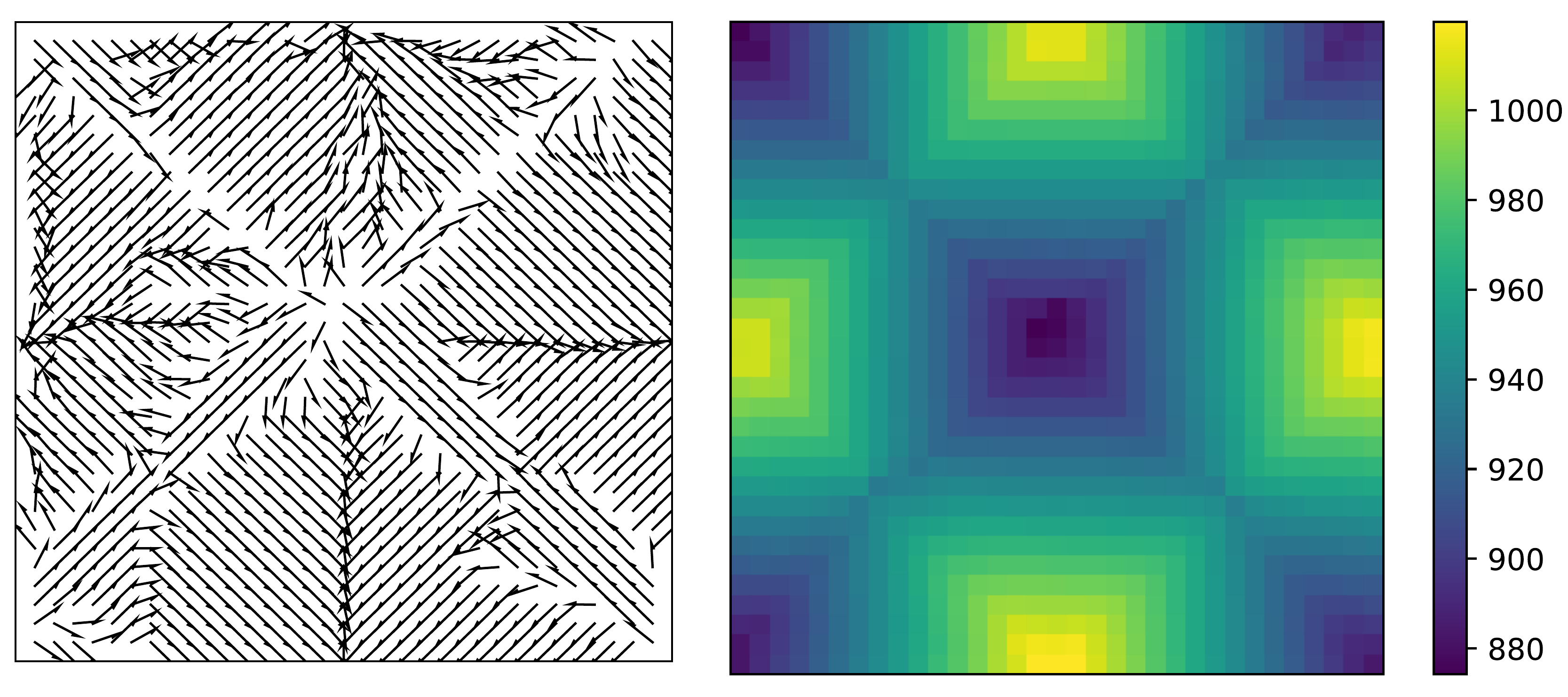}
\end{minipage}}
\subfigure[ZOSPI-GP]{
\begin{minipage}[b]{0.32\linewidth}
\label{vis_toy_end}
\includegraphics[width=0.98\linewidth]{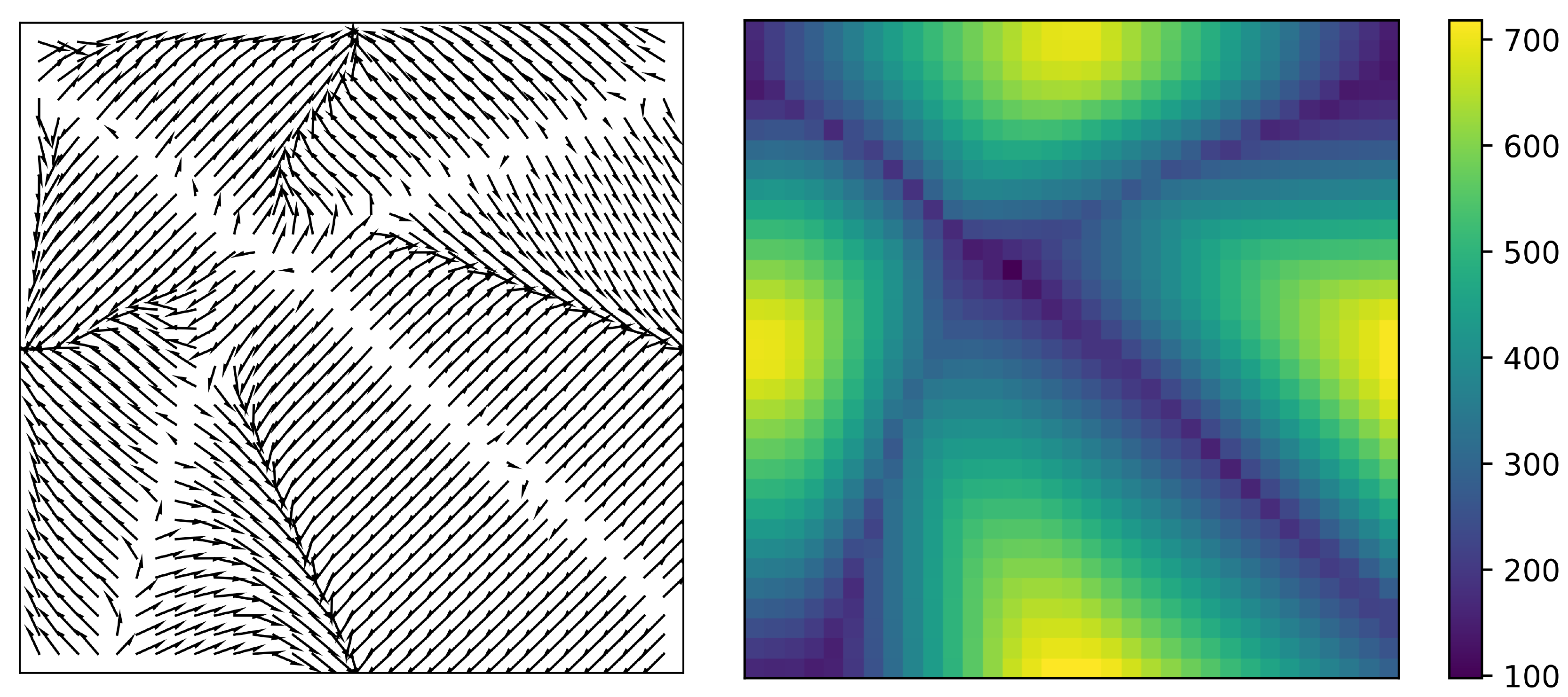}
\end{minipage}}\\
\caption{Visualization of learned policies on the FSM environment. (a) the FSM environment and its optimal solution, where the policy should find the nearest reward region and move toward it; (b) learning curves of different approaches; (c)-(i) visualize the learned policies and corresponding value functions. We run multiple repeat experiments and show the most representative and well-performing learned value function and policy of each method.}
\end{figure*}
The Four-Solution-Maze (FSM) environment is a diagnostic environment where four positive reward regions with a unit side length are placed in the middle points of $4$ edges of a $N\times N$ map. An agent starts from a uniformly initialized position in the map and can then move in the map by taking actions according to the location observations (current coordinates $x$ and $y$). Valid actions are limited to $[-1,1]$ for both $x$ and $y$ axes. Each game consists of $2N$ timesteps for the agent to navigate in the map and collect rewards. In each timestep, the agent will receive a $+10$ reward if it is inside one of the $4$ reward regions or a tiny penalty otherwise. For simplicity, there are no obstacles in the map, the optimal policy thus will find the nearest reward region, directly move towards it, and stay in the region till the end. Figure~\ref{four_way_maze} visualizes the environment and the ground-truth optimal solution. 

Although the environment is simple, we found it extremely challenging due to existence of multiple sub-optimal policies that only find some but not all four reward regions. We do not conduct grid search on hyper-parameters of the algorithms compared in our experiments but set them to default setting across all experiments. Though elaborated hyper-parameter tuning may benefit for certain environment.

On this environment we compare ZOSPI to on-policy and off-policy SOTA policy gradient methods in terms of the learning curves, each of which is averaged by $5$ runs. The results are presented in Figure~\ref{toy_curve}. And learned policies from different methods are visualized in Figure~\ref{vis_toy_start}-\ref{vis_toy_end}. For each method we plot the predicted behaviors of its learned policy at grid points using arrows (although the environment is continuous in the state space), and show the corresponding value function of its learned policy with a colored map. All policies and value functions are learned with $0.3 \mathrm{M}$ interactions except for SAC whose figures are learned with $1.2\mathrm{M}$ interactions as it can find $3$ out of $4$ target regions when more interactions are provided.

We use $4$ bootstrapped $Q$ networks for the upper bound estimation in consideration of both better value estimation and computational cost for ZOSPI with UCB. And in ZOSPI with GP, a GP model is used to replace the actor network in data-collection, $i.e.$,exploration.
The sample efficiency of ZOSPI is much higher than that of other methods. Noticeably ZOSPI with UCB exploration is the only method that can find the optimal solution, \emph{i.e.}, a policy directs to the nearest region with a positive reward. All other methods get trapped in sub-optimal solutions by moving to only part of reward regions they find instead of moving toward the nearest one.


\end{document}